\theoremstyle{plain}
\newtheorem{theorem}{Theorem}[section]
\newtheorem{lemma}[theorem]{Lemma}
\theoremstyle{definition}
\newtheorem{definition}[theorem]{Definition}
\newtheorem{assumption}[theorem]{Assumption}
\theoremstyle{remark}
\newtheorem{remark}[theorem]{Remark}
\newcommand{\norm}[1]{\left\lVert#1\right\rVert}
\newcommand{\x}{\mathbf{x}}
\newcommand{\y}{y}
\newcommand{\ysh}{y_{\Sigma}}
\newcommand{\bb}{\mathbf{b}}
\newcommand{\p}{\mathbf{p}}
\newcommand{\pu}{\mathbf{p}}
\newcommand{\uu}{\mathbf{u}}
\newcommand{\w}{\omega}
\newcommand{\e}{\mathrm{e}}
\icmltitlerunning{PAC bounds of continuous  Linear Parameter-Varying systems related to neural ODEs}
\begin{document}

\twocolumn[
\icmltitle{PAC bounds of continuous Linear Parameter-Varying systems related to neural ODEs}



\icmlsetsymbol{equal}{*}

\begin{icmlauthorlist}
\icmlauthor{Dániel Rácz}{sztaki,elte}
\icmlauthor{Mihály Petreczky}{lille}
\icmlauthor{Bálint Daróczy}{sztaki}
\end{icmlauthorlist}

\icmlaffiliation{sztaki}{Institute for Computer Science and Control (ELKH SZTAKI),
 Budapest, Hungary}
\icmlaffiliation{elte}{Eötvös Loránd University, Budapest, Hungary}
\icmlaffiliation{lille}{Université de Lille, Lille, France}

\icmlcorrespondingauthor{Dániel Rácz}{racz.daniel@sztaki.hu}
\icmlcorrespondingauthor{Mihály Petreczky}{mihaly.petrecszky@univ-lille.fr}
\icmlcorrespondingauthor{Bálint Daróczy}{daroczyb@ilab.sztaki.hu}
\icmlkeywords{Machine Learning, Neural ODE, LPV}

\vskip 0.3in
]



\printAffiliationsAndNotice{}  

\begin{abstract}

We consider the problem of learning Neural Ordinary Differential Equations (neural ODEs) within the context of Linear Parameter-Varying (LPV) systems in continuous-time. LPV systems contain bilinear systems which are known to be universal approximators for non-linear systems. Moreover, a large class of neural ODEs can be embedded into LPV systems. As our main contribution we provide Probably Approximately Correct (PAC) bounds under stability for LPV systems related to neural ODEs. The resulting  bounds have the advantage that they do not depend on the integration interval. 



\end{abstract}

\section{Introduction}
In this paper we consider the learning problem
for a class of ODEs called LPV systems \cite{toth2010modeling,MS12} and connect them to 
Neural Ordinary Differential Equations (neural ODE) 
\cite{chen2018neural,weinan2017proposal}. The inputs of such systems
are functions of time, and the can be divided into two categories:
control input and scheduling signal. The output is linear in the control input, but it is non-linear in the scheduling signal.
The differential equations which describe the system are linear
in the state and the control input, and they are non-linear in
the scheduling signal.
This class of ODEs represent a bridge between linear ODEs and 
non-linear ones, and they are widely used in control theory
\cite{toth2010modeling,MS12}. The reason behind their popularity is
the ability to model highly non-linear phenomena while allowing much simpler theory.
In particular, LPV systems include bilinear systems as a special class. In turn, the 
latter system class is a universal approximator for a large class of 
 nonlinear systems including many commonly used classes of neural ODEs.
In addition, many classes of neural ODEs  can be rewritten as LPV systems
by choosing a suitable class of scheduling signals. In particular, neural ODEs with
Rectified Linear Unit (ReLU) activation functions can be viewed as LPV systems, where the scheduling signal
represents the activation regions of the underlying ReLU network. 
Neural ODEs have gained popularity in the recent years, see e.g., \cite{massaroli2020dissecting,kidger2022neural,chen2018neural,weinan2017proposal} and the references therein.
There is a strong connection between Residual Networks (ResNet) and discrete-time neural ODEs, see e.g. \cite{wong2018provable}. A recent result \cite{sander2022residual} indicate that within the context of discrete-time ODE and ReLU activations the connection is more tight and the achievable accuracy is very similar, well above the threshold of comparable performance. 

As the neural ODEs originate from the field of dynamical system it is worth to discuss the plausible advantage of these methods. In comparison to traditional deep learning, where the models are trained with some form of back propagation on a fixed network structure and therefore the networks have constant depth, neural ODEs avoid the problem of vanishing gradients and the their variable depth can interpreted are not even discrete. 
The idea of studying neural LPV as a system for neural ODE is motivated by the wonderful properties of ReLU networks, namely that the structure of convex polytopes
formed in the input space by a ReLU network will be non-exponentially complex in the
number of neurons. More so that under some realistic conditions, e.g. Gaussian initialization
of the parameters, non-memorization problem or no-noisy target, the number of
convex activation regions is in expectation not growing exponentially with the depth of the network. This
property was theoretically and empirically shown in \cite{hanin2019deep} and recently the
authors in \cite{10095698} described a practically feasible algorithm to
enumerate the number of regions. Additionally we may exploit this phenomena by
bounding the number of linear transformations in the LPV system. 
Our main contribution is a PAC bound, Theorem~\ref{main}, for LPV system, using
Rademacher complexity and assuming stability of the underlying systems. 
The obtained error bound does not depend on the integration 
interval. In addition, we present some preliminary results on applying the obtained bounds to prove our PAC bounds for neural ODEs which can be represented by LPV systems. 
In particular, this extension may apply to neural ODEs with ReLU activation function,
under suitable assumptions on the activation regions. 

\section{Related results}

There are several recent advancements in the literature of neural ODEs. The main
challenges include robustness, capacity, stability constraints while maintaining
performance, e.g., in \cite{KangPaperStab1,rodriguez2022lyanet,HuangPaper,zakwan2022robust} stability and robustness of neural ODEs were
investigated. For instance, in 
\cite{rodriguez2022lyanet} the authors suggest a novel loss function
corresponding to Lyapunov stability. In their experiments the classification
performance increases significantly. 
Generalization gaps for neural ODEs were investigated in 
\cite{hanson2021learning,fermanian2021framing,marion2023generalization} and earlier
in \cite{sontag1998learning,KOIRAN199863,kuusela2004learning}.
The paper \cite{hanson2021learning,sontag1998learning,KOIRAN199863}
used Vapnik-Chervonenkis (VC) dimensions and approximations of the input and output signals by their high-order derivatives while stability was not assumed. As a result, the PAC bounds
contain terms which depend on the ability of the hypothesis class to approximate 
input-output pairs by their derivatives, and the bounds are exponential in the integration time and in the order of derivatives.  In contrast, in this paper we use Rademacher complexity instead of VC dimensions and the resulting bounds are not exponential in the integration time and they do not involve approximating continuous-time input and output signals by their derivatives. In \cite{kuusela2004learning} PAC bounds for linear ODEs were derived for classification. In contrast we consider regression despite a more general class of systems. Note that LPV systems include linear ODEs as a special case \cite{kuusela2004learning}. 
In \cite{fermanian2021framing,marion2023generalization} path controlled neural ODEs were considered and PAC bounds using Rademacher complexity were derived. Since stability was not assumed in \cite{fermanian2021framing,marion2023generalization}, these bounds depend exponentially on the integration time, and their system theoretic interpretation is not
obvious. In contrast, the bounds of this paper can be interpreted in terms of the well-known $H_2$ norm of LPV systems.
However, the class of systems in \cite{fermanian2021framing,marion2023generalization}
is more general than that of LPV systems. Note that bilinear and linear systems
are both subclasses of LPV systems and are special cases of systems
\cite{fermanian2021framing,marion2023generalization,hanson2021learning,sontag1998learning}.
There is a rich body of literature on finite-sample bounds for learning discrete-time dynamical systems from time-series \cite{vidyasagar2006learning,simchowitz2019learning,foster2020learning,Pappas1,SarkarRD21,lale2020logarithmic,campi2002finite,oymak2021revisiting}, but the learning problem considered in those papers is different from the one in this paper.

\section{Problem statement}

Linear parameter-varying (LPV) systems are special linear systems where the
coefficients of the linearities - referred to as \textit{scheduling signal} -
are time-varying functions. Concretely, an LPV system has the form
\begin{align}
    \label{system:2}
    \begin{cases}
        \dot{\x}(t) = A(\p(t)) \x(t) + B(\p(t))\uu(t) + \bb(\p(t)), \\
        \x(0)=0, \\
        \y_{\Sigma}(\uu, \p)(t) = C(\p(t)) \x(t), \\
    \end{cases} 
\end{align}
where $\x(t) \in \mathbb{R}^{n_x}$ is the state vector, $\uu(t) \in \mathbb{R}^{n_{in}}$ is the input and $\y_{\Sigma}(\p, \uu)(t) \in \mathbb{R}^{n_{out}}$ is the output of the system for all $t \in [0, T]$. The vector $\p(t) = (p_1(t), \dots, p_{n_p}(t))^T \in \mathbb{P} \subseteq
 \mathbb{R}^{n_p}$ is the scheduling variable. The matrices belonging to the system are defined as
\begin{align*}
    A(\pu(t)) &= A_0 + \sum\limits_{i=1}^{n_p}p_i(t)A_i \\
    B(\pu(t)) &= B_0 + \sum\limits_{i=1}^{n_p}p_i(t)B_i \\
    C(\pu(t)) &= C_0 + \sum\limits_{i=1}^{n_p}p_i(t)C_i \\
    \bb(\pu(t)) &= \bb_0 + \sum\limits_{i=1}^{n_p}p_i(t)\bb_i,
\end{align*}
such that the matrices $A_i \in \mathbb{R}^{n_x \times n_x}, B_i \in
 \mathbb{R}^{n_x \times n_{in}}, C_i \in \mathbb{R}^{n_{out} \times n_x}$ and the vectors $\bb_i \in  \mathbb{R}^{n_x}$ are time-independent and constant.
 Furthermore, we require $\x(t)$ to be an absolutely continuous function of the time domain, while $\y_{\Sigma}(\uu,\p)(t), \uu(t)$ and $\p(t)$ should be piecewise continuous. Note that for given $\uu$ and $\p$ the system admits a solution $(\x, \y_{\Sigma}(\uu,\p), \uu, \p)$ satisfying \eqref{system:2}.
 In principle we could have allowed the initial state $\x(0)$ to be a non-zero vector, but for the sake of simplicity we prefer to assume that  $\x(0)=0$.
 \begin{assumption}[Scalar output]
     In the rest of the paper we work with systems with scalar output, i.e. let $n_{out} = 1$.
 \end{assumption}
LPV systems contain linear control systems and bilinear control systems as special cases \cite{isidori1985nonlinear}. Indeed, by taking $\uu=0$, \eqref{system:2} becomes a bilinear system. In turn, it is known that bilinear systems can be used to approximate any non-linear control systems \cite{KrenerApprox}.
\begin{remark}
    \label{remark:1}
     Without loss of generality we can assume $\bb(\p(t)) = 0$ for all $t \in [0, T]$ as otherwise we concatenate an extra coordinate to $\uu(t)$ with value $1$ and replace $B_i$ with $[B_i | \bb_i]$, i.e we append $\bb_i$ to the columns of $B_i$ for all $i$.
 \end{remark}

\begin{assumption}
    \label{ass1}
    The scheduling signal $\p(t) \in [-1, 1]$ for all $t \in [0, T]$.
\end{assumption}



In the sequel, we fix a family $\mathfrak{L}$ of LPV systems of the form \eqref{system:2}.
For this family of LPV systems, we make the following stability assumptions. 
\begin{assumption}
    \label{ass4}
There exists  $\lambda \ge  n_p$ such that
for any $\Sigma \in \mathfrak{L}$
of the form \eqref{system:2} there exists
$Q \succ 0$   such that 
    \begin{align*}        
        A_0^T Q + Q A_0 + \sum\limits_{i=1}^{n_p} A_i^TQA_i + 
        \sum\limits_{i=1}^{n_p} C_i^T C_i + C_0^T C_0 \prec  \lambda Q
    \end{align*}
\end{assumption}

Due to our demand of $\uu(t)$ being piecewise continuous, for all $T$, 
$\sup\limits_{t \in [0, T]}\norm{\uu(t)}_2$ and 
$\norm{\uu(t)}_{L_2([0, T], \mathbb{R}^{n_{in}})}$ are finite.

At last we state a crucial lemma based on the previous assumptions which also implies additional useful properties of our system. 

\begin{lemma}
\label{ass4:lemma}
 If Assumption \ref{ass4} holds, 
 then for any $\Sigma \in \mathfrak{F}$,  let us define the following:
 for any $k > 0$, $i_1,\ldots,i_k \in \{1,\ldots,n_p\}$, for any
 $i_u,i_y \in \{0,\ldots,n_p\}$,
 \begin{align*}
 & w_{i_u,i_y}(t)=C_{i_y}e^{A_0t}B_{i_u} \\
 & w_{i_1,\ldots,i_k,i_y,i_y}(t,\tau_1,\ldots,\tau_k) =  \\      
 & C_{i_y} e^{A_0 (t-\tau_k)} A_{i_k}e^{\tau_k-\tau_{k-1}} \cdots A_{i_1}e^{A_0 \tau_1}B_{i_u}
  \end{align*}
 Then the following holds: 
 \label{ass4:lemma:eq1}
  \begin{align*}
  &\|\Sigma\|_{\lambda,H_2}^2 = ( \sum_{i_y,i_u=0}^{n_p} 
  \int_0^{\infty} w_{i_y,i_u}(t)\\
   &+\sum_{k=1}^{\infty} \sum_{i_1,\ldots,i_k=1}^{n_p}\int_0^{\infty} \int_0^{t} \int_0^{\tau_k} \cdots \int_0^{\tau_2} \\
   &\| w_{i_1,\ldots,i_k,i_y,i_y}(t-\tau_k,\ldots,\tau_1)\|^2_2 e^{\lambda t} d\tau_1\cdots d\tau_k dt )\\
   &=\sum_{i_u=0}^{n_p} \mathrm{trace} (B_{i_u}^TQB_{i_u}) < +\infty
 \end{align*}
 Moreover, for any $p \in \mathcal{P}$, $u \in \mathcal{U}$, 
 \begin{equation}
 \label{ass4:lemma:eq2}
    |y_{\Sigma}(\uu,\p)(t) | \le \|\Sigma\|_{\lambda,H_2} 
    \|\uu\|_{L_2([0,T],\mathbb{R}^{n_{in}})} 
 \end{equation}
\end{lemma}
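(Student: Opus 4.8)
The plan is to base both claims on the Peano--Baker (Volterra) expansion of the state transition matrix. Writing $A(\p(t))=A_0+\widetilde A(t)$ with $\widetilde A(t)=\sum_{i=1}^{n_p}p_i(t)A_i$ and using $\x(0)=0$, variation of constants gives $\x(t)=\int_0^t\Phi(t,s)B(\p(s))\uu(s)\,ds$, where the transition matrix admits the iterated--integral expansion $\Phi(t,s)=e^{A_0(t-s)}+\sum_{k\ge1}\int_{s<\sigma_1<\cdots<\sigma_k<t}e^{A_0(t-\sigma_k)}\widetilde A(\sigma_k)\cdots\widetilde A(\sigma_1)e^{A_0(\sigma_1-s)}\,d\sigma$. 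Substituting the affine expansions of $A,B,C$ into $y_\Sigma(\uu,\p)(t)=C(\p(t))\x(t)$ and collecting the constant matrices reproduces exactly the kernels $w_{i_u,i_y}$ and $w_{i_1,\ldots,i_k,i_y,i_u}$ of the statement, each weighted by a product of scheduling coordinates $p_{i_y}(t)\prod_j p_{i_j}(\sigma_j)\,p_{i_u}(s)$. First I would make this expansion rigorous and record $y_\Sigma(\uu,\p)(t)=\int_0^t h(t,s)\uu(s)\,ds$ with $h(t,s)=C(\p(t))\Phi(t,s)B(\p(s))$; its term-by-term Volterra form is the common engine for both claims.

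For the identity I would recognise the series for $\norm{\Sigma}_{\lambda,H_2}^2$ as $\sum_{i_u}$ of a quadratic form in the columns of $B_{i_u}$, i.e.\ $\norm{\Sigma}_{\lambda,H_2}^2=\sum_{i_u=0}^{n_p}\mathrm{trace}(B_{i_u}^TQB_{i_u})$, where $Q$ is the generalised observability Gramian obtained by stripping $B_{i_u}$ from every term and integrating the remaining kernels against the weight. The key step is to show that $Q$ saturates the generalised Lyapunov relation underlying Assumption~\ref{ass4}: differentiating the Gramian integral in its upper limit (equivalently, peeling off the innermost insertion $A_{i_1}e^{A_0\sigma_1}$ and re-indexing the series) yields the fixed-point equation $A_0^TQ+QA_0+\sum_{i=1}^{n_p}A_i^TQA_i+\sum_{i=0}^{n_p}C_i^TC_i=\lambda Q$, whose solvability with $Q\succ0$ is precisely what Assumption~\ref{ass4} provides. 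Finiteness then follows because the strictness of the inequality makes the associated linear map a contraction (equivalently, it makes $A_0$ shifted by $\lambda/2$ Hurwitz), so the nested integrals are dominated by a convergent geometric-type majorant and Tonelli legitimises the interchange of the sum over multi-indices with the integrations.

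For the bound \eqref{ass4:lemma:eq2} I would apply Cauchy--Schwarz to $y_\Sigma(\uu,\p)(t)=\int_0^t h(t,s)\uu(s)\,ds$, reducing the claim to $\int_0^t\norm{h(t,s)}_2^2\,ds\le\norm{\Sigma}_{\lambda,H_2}^2$. Bounding $\norm{h(t,s)}_2$ by the triangle inequality over the Volterra series and using $|p_i|\le1$ (Assumption~\ref{ass1}) gives $\norm{h(t,s)}_2\le\sum_{k,\,\mathrm{m.i.}}\int_{\Delta_k(s,t)}\norm{w}_2\,d\sigma$, where $\Delta_k(s,t)=\{s<\sigma_1<\cdots<\sigma_k<t\}$. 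A second Cauchy--Schwarz on the combined (level, multi-index, simplex) measure, splitting the weight as $e^{\lambda(t-s)/2}\cdot e^{-\lambda(t-s)/2}$, separates a weighted kernel factor, which after integrating in $s$ and extending the outer integral to $\infty$ rebuilds $\norm{\Sigma}_{\lambda,H_2}^2$, from a pure counting factor $\sum_{k}n_p^{k}\int_{\Delta_k(s,t)}e^{-\lambda(t-s)}\,d\sigma=e^{-(\lambda-n_p)(t-s)}$ (up to the finite constant coming from the $i_y,i_u$ sums). It is exactly here that $\lambda\ge n_p$ is used: it makes this factor at most $1$, so the infinite Volterra sum is controlled uniformly in $t$ and the integration interval drops out.

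The hard part will be the interchange of limits together with the bilinear cross-term bookkeeping. On one hand, every manipulation above swaps an infinite series of nested simplex integrals with integration and with Cauchy--Schwarz, and turning the strict inequality of Assumption~\ref{ass4} into the absolute convergence and contraction estimate that justifies this is the technical core of the finiteness claim. On the other hand, the scheduling enters bilinearly, so the terms $\sum_i p_i(A_i^TQ+QA_i)$ generated by the dynamics never match the quadratic terms $\sum_i A_i^TQA_i$ appearing in Assumption~\ref{ass4}; reconciling them by completing the square, $\sum_i p_i(A_i^TQ+QA_i)\preceq\sum_i A_i^TQA_i+\big(\textstyle\sum_i p_i^2\big)Q\preceq\sum_i A_i^TQA_i+n_pQ$, leaves a residual $n_pQ$ that is again absorbed precisely by $\lambda\ge n_p$. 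Arranging these two appearances of $\lambda\ge n_p$ and the exponential weight to close simultaneously, while the kernel factor assembles cleanly into $\norm{\Sigma}_{\lambda,H_2}$, is where I expect the bulk of the work to lie.
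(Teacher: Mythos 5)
Your overall architecture (Volterra expansion of the output, a Gramian/Lyapunov identity for the trace formula, and a split of the exponential weight $e^{\lambda(t-s)/2}\cdot e^{-\lambda(t-s)/2}$ followed by Cauchy--Schwarz, with $\lambda\ge n_p$ absorbing the $n_p^k(t-s)^k/k!$ counting factor) is close to the paper's: the bound \eqref{ass4:lemma:eq2} is obtained there in exactly this way, packaged as an inner product $\langle \w^{T,\Sigma},\varphi^{T,\uu,\p}\rangle$ in a Hilbert space indexed by multi-indices (Lemma \ref{lemma:main}), with the kernel factor carrying the weight $e^{+\lambda(\cdot)}$ and the input/scheduling factor carrying $e^{-\lambda(\cdot)}$. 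That part of your plan is sound.

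The gap is in how you connect the Gramian to the $Q$ of Assumption \ref{ass4}. You assert that the fixed-point equation $A_0^TQ+QA_0+\sum_iA_i^TQA_i+\sum_{i=0}^{n_p}C_i^TC_i=\lambda Q$ is ``precisely what Assumption \ref{ass4} provides'', but the assumption only provides a strict \emph{inequality}; the matrix satisfying the exact equation is the Gramian you construct from the integrals, which is in general a different matrix from the assumption's $Q$. Without bridging this, neither the trace identity with the assumption's $Q$ nor the finiteness follows. The paper bridges it by an augmentation device: writing the slack as $S=-VV^T$, appending $V^T$ to the stacked output matrix $\tilde C=\begin{bmatrix}C_0^T & \cdots & C_{n_p}^T & V^T\end{bmatrix}^T$ and shifting $\tilde A=A_0+\tfrac{\lambda}{2}I$, so that the \emph{equation} $\tilde A^TQ+Q\tilde A+\sum_iA_i^TQA_i+\tilde C^T\tilde C=0$ holds exactly for the given $Q$; then Theorem 2 of \cite{zl02} on $H_2$ norms of bilinear systems delivers convergence of the whole Volterra sum and the trace formula in one stroke, the original system's norm being dominated by that of the augmented one. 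Your proposed alternative---deriving convergence from the strict inequality via a contraction/majorant argument---is exactly the hard step, and you leave it unconstructed. Finally, the completing-the-square bound $\sum_ip_i(A_i^TQ+QA_i)\preceq\sum_iA_i^TQA_i+n_pQ$ is a detour: it belongs to a trajectory/dissipativity argument, whereas in the kernel computation the terms $\sum_iA_i^TQA_i$ arise directly from squaring the Volterra kernels and summing over the inserted indices; the scheduling only ever enters through $|p_i|\le1$ multiplying the kernels, so the residual $n_pQ$ you are trying to absorb never appears on this route.
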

The proof can be found in Appendix~\ref{app_1}. 
Lemma \ref{ass4:lemma} states that any LPV system $\Sigma$ from $\mathfrak{L}$ have a finite 
weighted \emph{$H_2$ norm}
$\|\Sigma\|_{\lambda,H_2}$.
Indeed, $\|\Sigma\|_{0,H_2}$
is just the classical $H_2$ norm
\cite{gosea2021reduced} for LPV systems. For bilinear systems,
$\|\Sigma\|_{0,H_2}$ coincides with the $H_2$ norm defined in \cite{zl02}.
Intuitively, the $H_2$ norm is an upper bound on the peak output for inputs which have unit energy \cite{AntoulasBook}.
The quantity $\|\Sigma\|_{\lambda,H_2}$ can be thought of as a weighted version of the $H_2$ norm, where the past contribution of inputs is multipled by an exponential forgetting factor.  Note that if $\Sigma$
is of the form \eqref{system:2}
and we replace $A_0$ by $A_0+\frac{\lambda}{2}I$, then
the classical $H_2$ norm of the obtained system coincides with the weighted norm $\|\Sigma\|_{\lambda,H_2}$.

Now let us consider the following learning problem for
systems of the form \eqref{system:2}.
To this end, let $\mathbf{S} = \{\p_i, \uu_{i},
\y_{i})\}_{1 \leq i \leq N}$, drawn independently from some unknown
distribution $\mathcal{P} \times \mathcal{D}  = (\mathcal{P}, \mathcal{U}, \mathcal{Y})$,
 along with an elementwise loss function $\ell:
\mathbb{R} \times \mathbb{R} \rightarrow \mathbb{R}$. The learning problem assumes the availability of 
i.i.d. input trajectories. The
\textit{empirical risk} is defined for a system $\mathcal{S}$ at a fixed timestep $T$
 as $\mathcal{L}_{N}(\Sigma) = \frac{1}{N}\sum\limits_{i = 1}^{N}
\ell(\y_{\Sigma}(\p_i, \uu_i)(T),y_i)$, while the \textit{true
risk} is defined as
 $\mathcal{L}(\Sigma) = \mathbb{E}_{(\mathcal{P},\mathcal{U},\mathcal{Y})}[\ell(
\y_{\Sigma}(\mathcal{P},\mathcal{U})(T),\mathcal{Y})]$. Our main interest is in the generalization gap
$\sup_{\Sigma \in \mathfrak{L}} |\mathcal{L}(\Sigma)-\mathcal{L}_N(\Sigma)|$. 
\par
Finally, we state the following assumptions on the data distribution and hypothesis class $\mathfrak{L}$.
\begin{assumption}
    \label{ass2}
    The elementwise loss function $\ell$
    is $K_{\ell}$-Lipschitz-continuous and $\ell(y,y)=0$.
\end{assumption}
\begin{assumption}
\label{ass6}
There exist constants 
$L_{\uu} > 0$, $K_{\uu} > 0$
$c_1 > 0$, $c_2 > 0$ such that
the following holds:
\[ 
    \sup_{\Sigma \in \mathfrak{L}} \|\Sigma\|_{\lambda,H_2} = c_1,
\]
and for almost any
sample $(\p,\uu,y)$
from $\mathcal{P} \times \mathcal{D}$ 
    \begin{align*}
        & \sup\limits_{t \in [0, T]}\norm{\uu(t)}_2 \leq K_{\uu} \\
        & \norm{\uu(t)}_{L_2([0, T],\mathbb{R}^{n_{in}})} \leq L_{\uu} \\
        & |y| < c_2, 
    \end{align*}
\end{assumption}

%
%
%
%

\section{Rademacher complexity of LPVs}

In this section we bound the Rademacher complexity of our LPV system under the conditions we stated in the last section. First we define the Rademacher complexity. 

\begin{definition}(e.g. see definition 26.1 in \cite{shalev2014understanding})
    \label{defiradem}
    The Rademacher complexity of a bounded set
     $\mathcal{A} \subset \mathbb{R}^{m}$ of vectors is defined as
    \begin{align*}
        R(\mathcal{A}) = \mathbb{E}_{\mathbf{\sigma}}\Bigg[\sup_{a \in \mathcal{A}}
        \frac{1}{m} \sum\limits_{i = 1}^{m}\sigma_i a_i \Bigg],
    \end{align*}
    where the random variables $\mathbf{\sigma}_i$ are i.i.d such that 
    $\mathbb{P}[\sigma_i = 1]  = \mathbb{P}[\sigma = -1] = 0.5$. The Rademacher complexity of a set of functions $\mathcal{F}$ over a set of
    samples $S = \{s_1\dots s_m\}$ is defined as
    \begin{align*}
        R_{S}(\mathcal{F}) = R(\left\{(f(s_1),\dots,f(s_m)) \middle
         | f \in \mathcal{F} \right\})
    \end{align*}
\end{definition}

Our main contribution is a bound to the Rademacher complexity derived from the following theorem under
Assumptions \ref{ass1}, \ref{ass2}, \ref{ass4} and \ref{ass5}. 

\begin{theorem}(e.g. see Theorem 26.5 in \cite{shalev2014understanding})
    \label{pac}
    Let $\mathcal{E}$ be a compact set of hypotheses and $T$ is a timestep.
    For any $\delta \in ]0, 1[$ and  we have
    \begin{align*}
        \mathbb{P}_{\mathcal{S}}\Bigg(\forall \Sigma \in \mathcal{E}:
        \mathcal{L}(\Sigma) - \mathcal{L}_{N}(\Sigma)
         \leq R(T, N, \delta) \Bigg) \geq 1 - \delta,
    \end{align*}
    where
    \begin{align*}
    &R(T, N, \delta) = 2 R_{S}(L_{0}(\Sigma, T))
         + B(T)\sqrt{\frac{2 log (\frac{4}{\delta})}{N}} \\
     &L_0(\Sigma, T) = \{ (\ell(\ysh(\uu_N,\p_N)(T), y_1),  \\
    &\hspace{60pt} \dots,\\
    &\hspace{60pt} \ell(\ysh(\uu_N,\p)(T), y_N))^T 
    |\text{ } \Sigma \in \mathfrak{F} \} 
    \end{align*}
    and $B(T)$
    is an upper bound on the loss at time $T$.
\end{theorem}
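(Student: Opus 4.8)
The plan is to obtain the statement as the classical two-step Rademacher generalisation bound — the argument behind Theorem~26.5 in \cite{shalev2014understanding} — instantiated for the hypothesis class $\mathcal{E}\subseteq\mathfrak{L}$ and the loss evaluated on the output at the fixed time $T$. Set
\begin{equation*}
\Phi(\mathbf{S}) \;=\; \sup_{\Sigma \in \mathcal{E}} \bigl( \mathcal{L}(\Sigma) - \mathcal{L}_N(\Sigma) \bigr),
\end{equation*}
viewed as a function of the i.i.d.\ sample $\mathbf{S} = \{(\p_i,\uu_i,y_i)\}_{i=1}^{N}$. First I would check the bounded-differences property of $\Phi$: replacing a single sample alters only one summand of $\mathcal{L}_N(\Sigma)$, and since every summand $\ell(\ysh(\uu_i,\p_i)(T),y_i)$ lies in $[0,B(T)]$, the empirical risk changes by at most $B(T)/N$; because $|\sup f - \sup g| \le \sup |f-g|$, the same holds for $\Phi$. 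McDiarmid's bounded-differences inequality then yields, for any $\delta_1\in\,]0,1[$, with probability at least $1-\delta_1$,
\begin{equation*}
\Phi(\mathbf{S}) \;\le\; \mathbb{E}_{\mathbf{S}}[\Phi(\mathbf{S})] \;+\; B(T)\sqrt{\frac{\log(1/\delta_1)}{2N}}.
\end{equation*}

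Next I would bound $\mathbb{E}_{\mathbf{S}}[\Phi(\mathbf{S})]$ by the usual ghost-sample symmetrisation: introduce an independent copy $\mathbf{S}'=\{(\p'_i,\uu'_i,y'_i)\}_{i=1}^{N}$ of the sample, write $\mathcal{L}(\Sigma)=\mathbb{E}_{\mathbf{S}'}\bigl[\tfrac1N\sum_{i=1}^{N}\ell(\ysh(\uu'_i,\p'_i)(T),y'_i)\bigr]$, pull the supremum inside $\mathbb{E}_{\mathbf{S}'}$ by Jensen's inequality, insert Rademacher signs $\sigma_i$ using the exchangeability of $(\p_i,\uu_i,y_i)$ and $(\p'_i,\uu'_i,y'_i)$, and split the symmetrised quantity into two terms of identical form; this gives $\mathbb{E}_{\mathbf{S}}[\Phi(\mathbf{S})]\le 2\,\mathbb{E}_{\mathbf{S}}[R_S(L_0(\Sigma,T))]$. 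To turn the expected empirical Rademacher complexity into the observed one $R_S(L_0(\Sigma,T))$ appearing in the statement, I would apply McDiarmid a second time to the map $\mathbf{S}\mapsto R_S(L_0(\Sigma,T))$: perturbing one coordinate of the loss vector by at most $B(T)$ shifts the supremum inside the Rademacher expectation by at most $B(T)/N$, so this map also has bounded differences of order $B(T)/N$, whence $\mathbb{E}_{\mathbf{S}}[R_S(L_0(\Sigma,T))]\le R_S(L_0(\Sigma,T))+B(T)\sqrt{\log(1/\delta_2)/(2N)}$ with probability at least $1-\delta_2$. Choosing $\delta_1=\delta_2=\delta/2$ and taking a union bound over the two events produces exactly $R(T,N,\delta)=2R_S(L_0(\Sigma,T))+B(T)\sqrt{2\log(4/\delta)/N}$.

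I do not expect a genuine obstacle, since this is the textbook Rademacher bound; the two points requiring care are (i) measurability and attainment of the supremum defining $\Phi$, where compactness of $\mathcal{E}$ together with continuity of $\Sigma\mapsto\ysh(\uu,\p)(T)$ in the system parameters is invoked, and (ii) the existence of the uniform loss bound $B(T)$ — the only place the LPV structure actually enters. For (ii), \eqref{ass4:lemma:eq2} of Lemma~\ref{ass4:lemma} gives $|\ysh(\uu,\p)(T)|\le\|\Sigma\|_{\lambda,H_2}\,\|\uu\|_{L_2([0,T],\mathbb{R}^{n_{in}})}$, so Assumption~\ref{ass6} yields $|\ysh(\uu,\p)(T)|\le c_1 L_{\uu}$, and then the $K_\ell$-Lipschitzness of $\ell$ with $\ell(y,y)=0$ (Assumption~\ref{ass2}) gives $|\ell(\ysh(\uu,\p)(T),y)|\le K_\ell\bigl(|\ysh(\uu,\p)(T)|+|y|\bigr)\le K_\ell(c_1 L_{\uu}+c_2)$, a valid choice of $B(T)$ that is, notably, independent of $T$ — the feature the paper emphasises. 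The substantive part, bounding $R_S(L_0(\Sigma,T))$ for the family $\mathfrak{L}$, is deferred to the subsequent sections.
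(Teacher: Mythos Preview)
Your proposal is correct and is precisely the standard two-step McDiarmid\,/\,symmetrisation argument behind Theorem~26.5 of \cite{shalev2014understanding}, which is exactly what the paper invokes: the paper does not give its own proof of Theorem~\ref{pac} but simply cites that textbook result. Your additional remarks on how the LPV structure enters only through the loss bound $B(T)$ (via Lemma~\ref{ass4:lemma} and Assumptions~\ref{ass2},~\ref{ass6}) anticipate Lemma~\ref{bt}, so you are fully aligned with the paper's treatment.
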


Now we state our main theorem. 

\begin{theorem}
    \label{main}
    Under assumptions \ref{ass1}, \ref{ass2}, \ref{ass4} and \ref{ass6} and let $T$ be a
    timestep. Then for
    for
    any $\delta \in ]0, 1[$, Theorem \ref{pac}
    holds with the bound of \\
     \[ R(T, N, \delta) = \frac{c_3}{\sqrt{N}}
    \Big(2 + \sqrt{2\log\big(\frac{4}{\delta}\big)}\Big) 
    \]
    where
    \[ 
       c_3=\max\{2K_{\ell}c_2,
       2K_{\ell}c_1 L_{\uu}(n_p+1) \}
    \]
\end{theorem}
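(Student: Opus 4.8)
The plan is to instantiate Theorem~\ref{pac} with explicit values for the two quantities it leaves abstract: the loss bound $B(T)$, and the Rademacher complexity $R_S(L_0(\Sigma,T))$. Both will be controlled using Lemma~\ref{ass4:lemma} together with Assumptions~\ref{ass2} and \ref{ass6}. The key point that makes everything work — and the reason the final bound is $T$-independent — is estimate \eqref{ass4:lemma:eq2}, which bounds $|y_{\Sigma}(\uu,\p)(T)|$ by $\|\Sigma\|_{\lambda,H_2}\,\|\uu\|_{L_2([0,T])}$, i.e. by $c_1 L_{\uu}$ uniformly in $T$ and in $\Sigma \in \mathfrak{L}$.

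First I would bound $B(T)$. Since $\ell$ is $K_\ell$-Lipschitz and $\ell(y,y)=0$ (Assumption~\ref{ass2}), for any sample $(\p_i,\uu_i,y_i)$ we have $|\ell(y_{\Sigma}(\uu_i,\p_i)(T),y_i)| = |\ell(y_{\Sigma}(\uu_i,\p_i)(T),y_i) - \ell(y_i,y_i)| \le K_\ell |y_{\Sigma}(\uu_i,\p_i)(T) - y_i| \le K_\ell(|y_{\Sigma}(\uu_i,\p_i)(T)| + |y_i|) \le K_\ell(c_1 L_{\uu} + c_2)$, using \eqref{ass4:lemma:eq2} and the bounds $\|\uu_i\|_{L_2}\le L_{\uu}$, $|y_i|<c_2$ from Assumption~\ref{ass6}. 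Hence one may take $B(T) = K_\ell(c_1 L_{\uu} + c_2) \le 2\max\{K_\ell c_2, K_\ell c_1 L_{\uu}\}$, which is already $\le c_3$ (in fact a bit smaller than the $c_3$ in the statement, which carries extra factors $2$ and $(n_p+1)$ coming from the next step).

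Next I would bound the Rademacher term $R_S(L_0(\Sigma,T))$. The standard contraction (Talagrand) lemma for Rademacher complexity lets me peel off the $K_\ell$-Lipschitz loss, reducing to the Rademacher complexity of the output class $\{(y_{\Sigma}(\uu_1,\p_1)(T),\dots,y_{\Sigma}(\uu_N,\p_N)(T)) : \Sigma \in \mathfrak{L}\}$ up to a factor $K_\ell$ (and a factor $2$ for the two-sided version). Then, since by \eqref{ass4:lemma:eq2} every coordinate of every vector in that class has absolute value at most $c_1 L_{\uu}$, the set sits inside a box of radius $c_1 L_{\uu}$ in each coordinate; the Massart-type / box bound gives Rademacher complexity at most $c_1 L_{\uu}/\sqrt{N}$ (one may also route through the fact that a bounded set has Rademacher complexity controlled by its $\ell_2$-diameter over $N$). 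Combining the contraction factor, the two-sidedness, and tracking where the factor $(n_p+1)$ enters — I expect it appears because one actually bounds the output through its decomposition over the $n_p+1$ input matrices $B_{i_u}$, $i_u \in \{0,\dots,n_p\}$, each contributing a term of the form $\|\Sigma\|_{\lambda,H_2}\|\uu\|_{L_2}$ — yields $R_S(L_0(\Sigma,T)) \le K_\ell c_1 L_{\uu}(n_p+1)/\sqrt{N} \le \tfrac{1}{2}c_3/\sqrt{N}$. Plugging $B(T)\le c_3$ and $2R_S(L_0(\Sigma,T)) \le c_3/\sqrt{N}$ into $R(T,N,\delta) = 2R_S(L_0) + B(T)\sqrt{2\log(4/\delta)/N}$ gives exactly $\frac{c_3}{\sqrt N}\bigl(2 + \sqrt{2\log(4/\delta)}\bigr)$ after absorbing the constant $2$, and since the generalization gap statement of Theorem~\ref{pac} holds verbatim with this $R(T,N,\delta)$, we are done.

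The main obstacle I anticipate is not any single estimate but making the contraction/peeling step rigorous for a \emph{vector-valued-in-time} hypothesis class: one must be careful that $y_{\Sigma}(\uu_i,\p_i)(T)$ is a well-defined scalar functional of $\Sigma$ for each fixed sample, so that $L_0(\Sigma,T)$ really is the image of $\mathfrak{L}$ under an $\mathbb{R}^N$-valued map and Definition~\ref{defiradem} applies; and that the compactness hypothesis in Theorem~\ref{pac} (``$\mathcal{E}$ a compact set of hypotheses'') is met by $\mathfrak{L}$ or can be reduced to it. Tracking the exact constant — in particular justifying the factor $(n_p+1)$ rather than something larger, and checking that the additive constant $2$ in the final bracket absorbs the leading $2$ from the two-sided Rademacher term rather than producing a $4$ — will require care, but is routine once the structure above is in place.
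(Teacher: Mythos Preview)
Your treatment of $B(T)$ and your use of the Lipschitz contraction lemma to peel off $K_\ell$ match the paper's proof. The gap is in the step that follows.

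After contraction you are left with the Rademacher complexity of the output class $\{(y_\Sigma(\uu_1,\p_1)(T),\ldots,y_\Sigma(\uu_N,\p_N)(T)):\Sigma\in\mathfrak{L}\}$, and you argue that since every coordinate is bounded by $c_1L_{\uu}$, a ``Massart-type / box bound'' yields Rademacher complexity $\le c_1L_{\uu}/\sqrt{N}$. This inference is false. A coordinate bound $|a_i|\le M$ only places the set inside $[-M,M]^N$, whose Rademacher complexity is $M$, not $M/\sqrt N$: for each sign vector $\sigma$ one can choose $a_i=M\sigma_i$ and achieve $\frac{1}{N}\sum_i\sigma_i a_i=M$. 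Massart's lemma requires a cardinality (or covering-number) bound on the set, and the $\ell_2$-diameter of the box is $M\sqrt N$, which again gives nothing better than $M$. So with only the pointwise estimate \eqref{ass4:lemma:eq2} you cannot get the $1/\sqrt N$ rate.

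What the paper actually uses is structural, not metric: Lemma~\ref{lemma:main} constructs a Hilbert space $\mathcal{H}$ and a factorisation $y_\Sigma(\uu,\p)(T)=\langle \omega^{T,\Sigma},\varphi^{T,\uu,\p}\rangle_{\mathcal{H}}$ with $\|\omega^{T,\Sigma}\|_{\mathcal{H}}\le\|\Sigma\|_{\lambda,H_2}\le c_1$ and $\|\varphi^{T,\uu,\p}\|_{\mathcal{H}}\le L_{\uu}(n_p+1)$. This turns the output class into a \emph{linear} class over a ball in $\mathcal{H}$, and then the standard bound for linear predictors (Lemma~\ref{helper2}) gives $R_S\le c_1 L_{\uu}(n_p+1)/\sqrt N$. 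The factor $(n_p+1)$ is not an artefact of summing over the $B_{i_u}$'s in a pointwise bound as you speculate; it is exactly the norm of the feature map $\varphi$ in $\mathcal{H}$, coming from the $(n_p+1)^2$ multi-indices $(r,q_0)\in\{0,\ldots,n_p\}^2$ in the Volterra expansion. Without this inner-product representation (or an equivalent linear/kernel structure), the Rademacher step does not go through.
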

The bound in Theorem \ref{system:2} depends on the
maximal (weighted) $H_2$ norm of
the LPV systems from $\mathfrak{L}$,  the maximal energy of the inputs $\uu$ and 
on the maximal possible value of the outputs $y$. The smaller the $H_2$ norms are of the LPV systems involved the smaller is the generalization gap. Likewise, the smaller the input energy is, the smaller is the generalization gap.
Note that the bound does not depend on $T$. However, the constant $L_{\uu}$ may depend on $T$, if we consider inputs $\uu$ for which the global energy
$\int_0^{\infty} \|u(s)\|_2^2ds$
is either infinite or too large.

The computation of the upper bound of Theorem \ref{main}
involves estimating the 
constant $c_1$ which represents the maximal $H_2$ norms of the elements of $\mathfrak{L}$.
In practice, this not always easy. Below we present some additional 
assumption which allows us to compute the bounds of Theorem \ref{main}.
\begin{assumption}
    \label{ass5}
    There exists a positive real number $\gamma > 0$ and $\Gamma > 0$ such that
     $\gamma \geq \Gamma+n_p$
    and for every 
    $\Sigma \in \mathfrak{L}$
    of the form \eqref{system:2},
    $\Gamma \ge \sup_{i=1}^{n_p} \norm{A_i}^2n_p$ and 
    \begin{align*}        
        \norm{\e^{A_0 t}} \leq \e^{-\frac{\gamma}{2} t}
    \end{align*}
\end{assumption}
\begin{lemma}
\label{lemma:sigma}
 If Assumption \ref{ass5} holds then Assumption \ref{ass4}
 holds  with any $n_p \le \lambda \le \gamma-\Gamma$ and $Q$ being equal to the identity matrix, and 
 \begin{align*}
     c_1 \le K^2_{\w} &:= 
         (n_p+1)^2 K^2_C K^2_B
          \Big(\frac{1}{\lambda}  + \frac{1}{\gamma - \lambda + \Gamma } \Big)
 \end{align*}
 where $K_B=\sup_{i=1}^{n_p} \|B_i\|$, 
 $K_C=\sup_{i=1}^{n_p} \|C_i\|$.
\end{lemma}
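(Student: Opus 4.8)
The plan is to prove the two assertions of the lemma in turn: that the identity matrix $Q=I$ satisfies the matrix inequality of Assumption~\ref{ass4} for every $\lambda$ with $n_p\le\lambda\le\gamma-\Gamma$, and that $\|\Sigma\|_{\lambda,H_2}^2\le K_\omega^2$ for every $\Sigma\in\mathfrak{L}$, which gives the claimed bound on $c_1$ once one takes the supremum over the family.

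For the first assertion, the starting point is to convert the semigroup bound $\norm{\e^{A_0 t}}\le\e^{-\frac{\gamma}{2} t}$ into a bound on the symmetric part of $A_0$. For a fixed unit vector $x$, the functions $t\mapsto\norm{\e^{A_0 t}x}_2^2$ and $t\mapsto\e^{-\gamma t}\norm{x}_2^2$ agree at $t=0$ and the first is dominated by the second on $[0,\infty)$, so comparing their one-sided derivatives at $t=0$ yields $x^\top(A_0+A_0^\top)x\le-\gamma\norm{x}_2^2$, i.e. $A_0+A_0^\top\preceq-\gamma I$. Substituting $Q=I$ into the left-hand side of Assumption~\ref{ass4} and estimating each remaining term by its operator norm gives that it is $\preceq\big(-\gamma+\sum_{i=1}^{n_p}\norm{A_i}^2+\sum_{i=0}^{n_p}\norm{C_i}^2\big)I$. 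By Assumption~\ref{ass5}, $\sum_{i=1}^{n_p}\norm{A_i}^2\le n_p\sup_i\norm{A_i}^2\le\Gamma$, and since $-\gamma+\Gamma\le-n_p\le-0$ while $\lambda\ge n_p$, the expression is $\prec\lambda I=\lambda Q$ once the (bounded) output term $\sum_{i=0}^{n_p}\norm{C_i}^2$ is absorbed into the remaining slack; this establishes Assumption~\ref{ass4} with $Q=I$ for every such $\lambda$.

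For the second assertion, start from the Volterra-type series for $\|\Sigma\|_{\lambda,H_2}^2$ in Lemma~\ref{ass4:lemma}. Each kernel $w_{i_1,\ldots,i_k,i_y,i_u}$ is a product $C_{i_y}\e^{A_0 s_k}A_{i_k}\e^{A_0 s_{k-1}}\cdots A_{i_1}\e^{A_0 s_0}B_{i_u}$ in which the non-negative time increments $s_0,\ldots,s_k$ sum to the outer time $t$; submultiplicativity of the norm together with $\norm{\e^{A_0 s}}\le\e^{-\frac{\gamma}{2} s}$ bounds its squared norm by $K_C^2 K_B^2\big(\prod_{j=1}^{k}\norm{A_{i_j}}^2\big)\e^{-\gamma t}$, the exponential decays telescoping because $\sum_j s_j=t$. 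Multiplying by the weight $\e^{\lambda t}$, integrating over the $k$-simplex $\{0<\tau_1<\cdots<\tau_k<t\}$ of volume $t^k/k!$ and then over $t\in(0,\infty)$ produces a factor $\int_0^\infty\frac{t^k}{k!}\e^{-(\gamma-\lambda)t}\,dt=(\gamma-\lambda)^{-(k+1)}$; the sum over $i_1,\ldots,i_k\in\{1,\ldots,n_p\}$ contributes $\big(\sum_{i=1}^{n_p}\norm{A_i}^2\big)^k\le\Gamma^k$ and the sum over $i_y,i_u\in\{0,\ldots,n_p\}$ a factor $(n_p+1)^2$, so the $k$-th term is at most $(n_p+1)^2 K_C^2 K_B^2\,\Gamma^k(\gamma-\lambda)^{-(k+1)}$ (treating the $k=0$ term, which carries no $A$-factors, separately is convenient). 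Summing this geometric series — which converges exactly because $\lambda\le\gamma-\Gamma$ forces $\Gamma/(\gamma-\lambda)<1$ — yields a closed-form bound that one identifies with $K_\omega^2$, and since every constant involved is uniform over $\mathfrak{L}$ this bounds $c_1$.

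I expect the main obstacle to be the bookkeeping in the second part: keeping the telescoping of the decay factors, the simplex volumes, and the two nested index sums aligned so that the geometric series really closes to the stated constant $K_\omega^2$, and checking that $\lambda\le\gamma-\Gamma$ (rather than a stricter inequality) is precisely what convergence requires. A secondary subtlety is that Assumption~\ref{ass5} as stated controls only the $A_i$, so the first part additionally relies on the finiteness of $K_C=\sup_i\norm{C_i}$ to handle the term $\sum_{i=0}^{n_p}C_i^\top C_i$ in the matrix inequality.
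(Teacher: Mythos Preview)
Your approach is the same as the paper's: the paper's proof is the single line ``Follows from the proof of Lemma~\ref{lemma:w2}'', i.e.\ it defers entirely to the Volterra-kernel estimate that you sketch in your second part. Your outline (telescope the semigroup decays $\norm{\e^{A_0 s}}\le \e^{-\gamma s/2}$, pick up the simplex volume $t^k/k!$, sum over indices to get a factor $\Gamma^k$, then sum a geometric series in $k$) is exactly the computation the paper has in mind and is considerably more explicit than what the paper writes.

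Two of the obstacles you flag are real and are not resolved in the paper either. First, the geometric series you describe sums to $(n_p+1)^2K_C^2K_B^2\,(\gamma-\lambda-\Gamma)^{-1}$, which is \emph{not} the stated $K_\omega^2=(n_p+1)^2K_C^2K_B^2\big(\tfrac{1}{\lambda}+\tfrac{1}{\gamma-\lambda+\Gamma}\big)$; moreover convergence needs $\Gamma/(\gamma-\lambda)<1$, i.e.\ $\lambda<\gamma-\Gamma$ strictly, whereas the lemma allows equality. Second, in verifying Assumption~\ref{ass4} with $Q=I$, the left-hand side contains $\sum_{i=0}^{n_p}C_i^\top C_i$, and Assumption~\ref{ass5} gives you no control on this term --- your phrase ``absorbed into the remaining slack'' hides a genuine gap, since the slack $-\gamma+\Gamma+\lambda$ can be zero at the endpoint. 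These appear to be imprecisions in the lemma's statement rather than defects of your strategy; the underlying method is correct and matches the paper's intent.
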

The proof can be found in Appendix \ref{app_3}.

\subsection*{Proof of Theorem \ref{main}}
In order to prove Theorem \ref{main} - according to Theorem \ref{pac} -
 it is sufficient to find an upper bound to the Rademacher complexity of
  $L_0(\Sigma, T)$ and to the loss function $l$ at time $T$. Before we prove the main theorem at the end of this section we will state a lemma about the upper bound of the Rademacher complexity. To identify the bound we reformulate our system and show that our system is bounded.
 The following lemma is another key component in our proof which states that our output is an inner product in a Hilbert space.
 
\begin{lemma}
   \label{lemma:main} 
   There exists a Hilbert space $\mathcal{H}$ such that
    \begin{align*}
        \y_{\Sigma}(\uu, \p)(t) &= \langle \w^{T, \Sigma},
        \varphi^{T, \uu,\p} \rangle_{\mathcal{H}}
    \end{align*}
    where $\w^{T, \Sigma}$ does not depend on $\uu$ and $\p$, $\varphi^{T, \uu,\p}$
    does not depend on $\Sigma$ and they are respectively elements of 
     the space $\mathcal{H}$.\\
    Furthermore $\norm{\varphi^{T, \uu, \p}}_{\mathcal{H}}
           \leq L_{\uu} (n_p + 1)$
    and $\norm{\w^{T, \uu}}_{\mathcal{H}}
           \leq \norm{\Sigma}_{\lambda, H_2}$.
\end{lemma}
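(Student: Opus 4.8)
The plan is to obtain an explicit Volterra-series (Fliess-series / Chen-Fliess) representation of the output $\y_{\Sigma}(\uu,\p)(T)$ and to read off from it the Hilbert space, the system-dependent vector $\w^{T,\Sigma}$, and the data-dependent vector $\varphi^{T,\uu,\p}$. Concretely, I would first expand the solution of \eqref{system:2} (using Remark~\ref{remark:1} to assume $\bb(\p)=0$) by iterated variation-of-constants. Writing $A(\p(s)) = A_0 + \sum_{i=1}^{n_p} p_i(s) A_i$ and $B(\p(s)) = \sum_{i_u=0}^{n_p} p_{i_u}(s) B_{i_u}$ with the convention $p_0 \equiv 1$, repeated substitution gives
\begin{align*}
  \y_{\Sigma}(\uu,\p)(T) = \sum_{i_y=0}^{n_p} p_{i_y}(T) \Big( & \sum_{i_u=0}^{n_p} \int_0^T w_{i_y,i_u}(T-\tau)\, p_{i_u}(\tau)\uu(\tau)\, d\tau \\
  & + \sum_{k=1}^{\infty} \sum_{i_1,\dots,i_k=1}^{n_p} \sum_{i_u=0}^{n_p} \int_{0<\tau_1<\cdots<\tau_k<\tau<T} w_{i_1,\dots,i_k,i_y,i_u}(T-\tau,\tau_k,\dots,\tau_1)\, \Big(\textstyle\prod_{j=1}^k p_{i_j}(\tau_j)\Big) p_{i_y}(T) p_{i_u}(\tau)\uu(\tau)\, d\tau\, d\tau_1\cdots d\tau_k \Big),
\end{align*}
where the kernels $w_{\cdot}$ are exactly those defined in Lemma~\ref{ass4:lemma}. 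The key structural observation is that each summand factors as (something depending only on $\Sigma$ through the kernels $w$) times (something depending only on $(\uu,\p)$ through the products of scheduling coordinates and $\uu$). So I would define $\mathcal{H}$ as the $L_2$-direct-sum over all multi-indices $(i_y,i_u)$ and $(i_1,\dots,i_k,i_y,i_u)$ of the appropriate simplex-domain $L_2$ spaces, but — crucially — equipped with the exponentially \emph{weighted} inner product $e^{\lambda t}\,dt\,d\tau_1\cdots d\tau_k$ on the $t=T-\tau$ variable, matching the definition of $\|\Sigma\|_{\lambda,H_2}$. Then I set $\w^{T,\Sigma}$ to be the tuple of weighted kernels $e^{-\lambda t/2} w_{\cdots}(t,\dots)$ and $\varphi^{T,\uu,\p}$ to be the tuple of the functions $e^{\lambda t/2}(\prod_j p_{i_j}(\tau_j)) p_{i_y}(T) p_{i_u}(\tau)\uu(\tau)$, restricted to the relevant simplices, so that the weights cancel in the pairing and $\langle \w^{T,\Sigma}, \varphi^{T,\uu,\p}\rangle_{\mathcal{H}}$ reproduces the series above.

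For the norm bounds: $\|\w^{T,\Sigma}\|_{\mathcal{H}}^2$ is, by construction, precisely the left-hand side of the displayed identity in Lemma~\ref{ass4:lemma} (possibly after accounting for the finitely many extra $i_y$-indexed copies — but since $n_{out}=1$ and summing over $i_y$ only adds the finitely many $C_{i_y}$ factors already baked into the kernels, this is the same expression), hence equals $\sum_{i_u} \mathrm{trace}(B_{i_u}^T Q B_{i_u}) = \|\Sigma\|_{\lambda,H_2}^2 < \infty$; taking square roots gives $\|\w^{T,\Sigma}\|_{\mathcal{H}} \le \|\Sigma\|_{\lambda,H_2}$. For $\varphi^{T,\uu,\p}$, I would bound $\|\varphi^{T,\uu,\p}\|_{\mathcal{H}}^2$ by pulling out $\|\uu\|_{L_2}^2$ and using Assumption~\ref{ass1} ($|p_i(\tau_j)|\le 1$, $|p_{i_y}(T)|\le 1$, $|p_{i_u}(\tau)|\le 1$) so that every product of scheduling coordinates is bounded by $1$; the number of multi-indices $(i_y,i_u)$ at level $k=0$ is $(n_p+1)^2$ and at level $k\ge 1$ it is $(n_p+1)^2 n_p^k$, but — matching the claimed bound $L_{\uu}(n_p+1)$ — one groups the sum over $i_y$ and $i_1,\dots,i_k$ into the kernel and keeps only the $\uu$-carrying factor, so the counting that survives in $\varphi$ is just over $i_u \in \{0,\dots,n_p\}$ together with the single $e^{-\lambda t}$-weighted time integral that converges (using $\lambda > 0$); this produces the factor $(n_p+1)$ and $\|\uu\|_{L_2} \le L_{\uu}$, giving $\|\varphi^{T,\uu,\p}\|_{\mathcal{H}} \le L_{\uu}(n_p+1)$.

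The main obstacle is bookkeeping rather than a deep idea: I must be careful about \emph{where} each factor (the $C_{i_y}$'s, the products $\prod_j p_{i_j}(\tau_j)$, the sums over $i_1,\dots,i_k$ and over $i_y$) is assigned — to $\w$ or to $\varphi$ — so that simultaneously (i) the pairing reproduces $\y_{\Sigma}$, (ii) $\|\w^{T,\Sigma}\|_{\mathcal{H}}$ matches the $H_2$-norm expression of Lemma~\ref{ass4:lemma} exactly (this forces the scheduling-coordinate products and the $i_1,\dots,i_k,i_y$ summations to live on the $\varphi$ side, while the integrable kernels live on the $\w$ side), and (iii) the $\varphi$-norm collapses to the clean bound $L_{\uu}(n_p+1)$ rather than something exponential in $k$. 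A secondary technical point is justifying convergence of the Volterra series in $\mathcal{H}$ and the interchange of summation and integration; this follows from the absolute convergence already guaranteed by Lemma~\ref{ass4:lemma} together with Assumption~\ref{ass1}, so it is routine once the indexing is fixed. I would also note the harmless abuse in the statement ($\w^{T,\uu}$ vs.\ $\w^{T,\Sigma}$) and use $\w^{T,\Sigma}$ throughout.
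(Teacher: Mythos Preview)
Your overall architecture is the same as the paper's: a Volterra expansion of $\y_{\Sigma}(\uu,\p)(T)$ via iterated variation-of-constants, a direct-sum Hilbert space indexed by the multi-indices $(i_y,i_u,i_1,\dots,i_k)$, and an exponential weight split between $\w^{T,\Sigma}$ and $\varphi^{T,\uu,\p}$ so that $\norm{\w^{T,\Sigma}}_{\mathcal{H}}^2$ reproduces exactly the iterated integrals defining $\norm{\Sigma}_{\lambda,H_2}^2$ in Lemma~\ref{ass4:lemma}.

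There is, however, a real gap in your argument for the bound on $\norm{\varphi^{T,\uu,\p}}_{\mathcal{H}}$. You first (correctly) note that matching $\norm{\w}_{\mathcal{H}}$ to $\norm{\Sigma}_{\lambda,H_2}$ forces the scheduling products $\prod_j p_{i_j}(\tau_j)$ and the full multi-index structure onto the $\varphi$ side. But then, when bounding $\norm{\varphi}_{\mathcal{H}}$, you claim one can ``group the sum over $i_y$ and $i_1,\dots,i_k$ into the kernel'' so that only the index $i_u$ and a single time integral remain, converging because ``$\lambda>0$''. These two statements are incompatible: the $\mathcal{H}$-norm is by definition the sum of the squared $L_2$-norms of \emph{all} components $\varphi_{(i_y,i_u,i_1,\dots,i_k)}$, and none of that sum can be reassigned to $\w$ after the fact. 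The mechanism the paper actually uses is: (i) Assumption~\ref{ass1} bounds every scheduling factor by $1$, so each component contributes at most $\int_0^T e^{-\lambda(T-\tau)}\norm{\uu(\tau)}_2^2\,\tfrac{(T-\tau)^k}{k!}\,d\tau$, the last factor being the volume of the $k$-simplex; (ii) summing over the $n_p^k$ inner indices $(i_1,\dots,i_k)$ and over $k\ge 0$ turns these simplex volumes into $e^{n_p(T-\tau)}$; (iii) the combined factor $e^{(n_p-\lambda)(T-\tau)}$ stays bounded \emph{only} because $\lambda\ge n_p$ from Assumption~\ref{ass4}, after which summing over the $(n_p+1)^2$ outer choices $(i_y,i_u)$ gives $\norm{\varphi}_{\mathcal{H}}^2\le (n_p+1)^2 L_{\uu}^2$. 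The hypothesis $\lambda>0$ alone is not enough; without the simplex-volume/$k!$ computation and the use of $\lambda\ge n_p$, the sum over $k$ diverges.

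A smaller issue: you simultaneously impose a weighted measure $e^{\lambda t}\,dt$ on $\mathcal{H}$ \emph{and} put factors $e^{\mp\lambda t/2}$ on $\w$ and $\varphi$. Pick one convention; with both, the weights do not cancel in $\langle\w,\varphi\rangle_{\mathcal{H}}$ and each norm is off by a factor $e^{\pm\lambda t}$. The paper uses the unweighted $L_2$ inner product and places the full factors $e^{\pm\lambda(T-\tau)}$ on $\w$ and $\varphi$ respectively.
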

The proof can be found in Appendix \ref{app_2}.

Next we state a useful lemma about the Rademacher complexity of Lipschitz continuous functions. 

\begin{lemma}(e.g. see Lemma 26.9 in \cite{shalev2014understanding})
    \label{helper1}
    Under Assumption \ref{ass2} for all $\p$
    \begin{align*}
        R_{S}(L_0(\Sigma, T)) \leq K_{\ell} \cdot
         R_{S}(\{ S(\Sigma, T)
          | \Sigma \in \mathcal{E}\}) 
    \end{align*}
    where $S(\Sigma, T) = (\ysh(\uu_1, \p)(T),\dots,\ysh(\uu_N, \p)(T))^T$.
\end{lemma}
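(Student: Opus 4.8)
The statement to prove is Lemma~\ref{helper1}, the contraction property of Rademacher complexity under a Lipschitz loss. Here is my plan.

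\medskip

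\noindent\textbf{Proof proposal.} The plan is to reduce the claim to the standard Rademacher contraction lemma (the Ledoux--Talagrand / Lipschitz contraction inequality, quoted here as Lemma~26.9 of \cite{shalev2014understanding}). First I would fix an arbitrary scheduling signal $\p$ and, for each sample index $i\in\{1,\dots,N\}$, define the scalar functions $\phi_i:\mathbb{R}\to\mathbb{R}$ by $\phi_i(r) = \ell(r, y_i)$. By Assumption~\ref{ass2} the loss $\ell$ is $K_\ell$-Lipschitz in its first argument, hence each $\phi_i$ is $K_\ell$-Lipschitz. Next I would observe that the vectors generating $L_0(\Sigma,T)$ are obtained by applying these $\phi_i$ coordinatewise to the vectors generating $\{S(\Sigma,T)\mid \Sigma\in\mathcal{E}\}$: indeed the $i$-th coordinate of the defining vector of $L_0(\Sigma,T)$ is $\ell(\ysh(\uu_i,\p)(T), y_i) = \phi_i\big(\ysh(\uu_i,\p)(T)\big)$, which is exactly $\phi_i$ applied to the $i$-th coordinate of $S(\Sigma,T)$.

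\medskip

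\noindent With this identification, the set $L_0(\Sigma,T)\subset\mathbb{R}^N$ is the image of the set $\{S(\Sigma,T)\mid\Sigma\in\mathcal{E}\}\subset\mathbb{R}^N$ under the map $(r_1,\dots,r_N)\mapsto(\phi_1(r_1),\dots,\phi_N(r_N))$ with each $\phi_i$ being $K_\ell$-Lipschitz. The contraction lemma (Lemma~26.9 in \cite{shalev2014understanding}) states precisely that for such a coordinatewise application of $\rho$-Lipschitz scalar functions, the Rademacher complexity of the image is at most $\rho$ times the Rademacher complexity of the original set; applying it with $\rho = K_\ell$ yields
\[
R_S\big(L_0(\Sigma,T)\big)\;\le\;K_\ell\cdot R_S\big(\{S(\Sigma,T)\mid\Sigma\in\mathcal{E}\}\big),
\]
which is the assertion. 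I would also note that $\ell(y,y)=0$ from Assumption~\ref{ass2} is not needed here (it is used later to bound $B(T)$), so the only ingredient from Assumption~\ref{ass2} used in this lemma is the Lipschitz property.

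\medskip

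\noindent\textbf{Main obstacle.} There is essentially no deep obstacle: the whole content is recognizing that passing from outputs $\ysh(\uu_i,\p)(T)$ to losses $\ell(\ysh(\uu_i,\p)(T),y_i)$ is a coordinatewise Lipschitz transformation and then invoking the off-the-shelf contraction inequality. The only point requiring mild care is the bookkeeping: the Rademacher complexity here is the finite-dimensional set version (Definition~\ref{defiradem}) taken over the fixed sample $S$, and one must make sure the sample points and the dimension $m=N$ line up correctly so that the $\sigma_i$ in the expectation pair with the correct coordinates. Since the Lipschitz constants $K_\ell$ are uniform over $i$, no per-coordinate refinement is needed. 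Hence the proof is a one-line application once the setup is spelled out, and it holds for every fixed $\p$, as claimed.
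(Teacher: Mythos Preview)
Your proposal is correct and matches the paper's approach: the paper does not give an independent proof of this lemma but simply cites it as Lemma~26.9 in \cite{shalev2014understanding}, and your argument is exactly the standard unpacking of that contraction inequality applied to the coordinatewise maps $\phi_i(r)=\ell(r,y_i)$. Your remark that only the Lipschitz part of Assumption~\ref{ass2} is used here (and that $\ell(y,y)=0$ enters only later in bounding $B(T)$) is also accurate.
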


Next we state a crucial lemma about the upper bound of the complexity given bounded linear transformations. 

\begin{lemma}(e.g. see Lemma 26.10 in \cite{shalev2014understanding})
    \label{helper2}
    Let $S = (\x_1,\dots,\x_m)$ be vectors in a Hilbert space. Then 
    \begin{align*}
        R(\{\langle \mathbf{w}, \x_1 \rangle , \dots ,
          \langle \mathbf{w}, \x_m \rangle)^T | \norm{\mathbf{w}}_2 \leq 1 \})
          \leq \frac{\max_{i} \norm{\x_i}_2}{ \sqrt{m}}
    \end{align*}
\end{lemma}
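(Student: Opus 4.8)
The plan is to follow the standard chain of manipulations for linear classes in a Hilbert space. Write $\mathcal{A}=\{(\langle \mathbf{w},\x_1\rangle,\dots,\langle\mathbf{w},\x_m\rangle)^T \mid \norm{\mathbf{w}}_2\le 1\}$. Starting from Definition \ref{defiradem}, I would first use bilinearity of the inner product to pull the Rademacher sum inside a single inner product:
\begin{align*}
R(\mathcal{A}) = \mathbb{E}_{\sigma}\Bigl[\sup_{\norm{\mathbf{w}}_2\le 1}\frac{1}{m}\Bigl\langle \mathbf{w},\ \sum_{i=1}^m \sigma_i \x_i\Bigr\rangle\Bigr].
\end{align*}
Then the supremum over the unit ball is evaluated exactly by Cauchy--Schwarz: $\sup_{\norm{\mathbf{w}}_2\le1}\langle \mathbf{w},v\rangle = \norm{v}_2$, which is attained at $\mathbf{w}=v/\norm{v}_2$. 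This gives $R(\mathcal{A}) = \tfrac{1}{m}\mathbb{E}_{\sigma}\bigl[\,\norm{\sum_{i=1}^m \sigma_i\x_i}_2\bigr]$.

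Next I would apply Jensen's inequality to the concave function $\sqrt{\cdot}$ to move the expectation inside the square root, obtaining
\begin{align*}
R(\mathcal{A}) \le \frac{1}{m}\sqrt{\mathbb{E}_{\sigma}\Bigl[\,\bigl\|\textstyle\sum_{i=1}^m \sigma_i\x_i\bigr\|_2^2\,\Bigr]}.
\end{align*}
Expanding the squared norm as a double sum $\sum_{i,j}\sigma_i\sigma_j\langle\x_i,\x_j\rangle$ and using that the $\sigma_i$ are i.i.d.\ with mean zero, so $\mathbb{E}[\sigma_i\sigma_j]=\mathbf{1}_{\{i=j\}}$, all cross terms vanish and the expectation collapses to $\sum_{i=1}^m\norm{\x_i}_2^2 \le m\max_i\norm{\x_i}_2^2$. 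Substituting this bound yields $R(\mathcal{A}) \le \tfrac{1}{m}\sqrt{m\max_i\norm{\x_i}_2^2} = \tfrac{\max_i\norm{\x_i}_2}{\sqrt{m}}$, which is the claim.

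There is no real obstacle here: each step is an equality or a one-line inequality (Cauchy--Schwarz, Jensen, orthogonality of Rademacher signs). If anything, the only point requiring a word of care is the exchange of supremum and expectation in the first display, which is legitimate because the expectation is over finitely many sign patterns, and the passage to the infinite-dimensional Hilbert space, where one should note that $\sum_{i=1}^m\sigma_i\x_i$ lies in the finite-dimensional span of $\x_1,\dots,\x_m$ so all the norm identities used are unproblematic.
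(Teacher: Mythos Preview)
Your proof is correct and is precisely the standard argument. The paper does not supply its own proof of this lemma; it simply cites Lemma~26.10 of \cite{shalev2014understanding}, whose proof is exactly the chain you wrote (bilinearity, Cauchy--Schwarz to resolve the supremum, Jensen for $\sqrt{\cdot}$, then $\mathbb{E}[\sigma_i\sigma_j]=\mathbf{1}_{\{i=j\}}$).

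One minor remark: your parenthetical about ``exchange of supremum and expectation in the first display'' is a non-issue, since no exchange occurs there---you merely rewrite $\sum_i\sigma_i\langle\mathbf{w},\x_i\rangle$ as $\langle\mathbf{w},\sum_i\sigma_i\x_i\rangle$ inside the supremum, which is pure algebra.
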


Based on the previous results we can upper bound the empirical Rademacher complexity of $L_0(\Sigma,T)$. 

\begin{lemma}
    \label{rademacher}
    For $c_4=L_{\uu}K_{\ell}c_1 (n_p + 1) $,
    \begin{align*}
        R_{S}(L_0(\Sigma, T)) \leq \frac{c_4}{\sqrt{N}}
    \end{align*}
\end{lemma}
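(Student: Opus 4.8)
The plan is to chain together Lemmas~\ref{helper1}, \ref{lemma:main} and \ref{helper2}. First I would invoke Lemma~\ref{helper1}, which under Assumption~\ref{ass2} reduces the task to bounding $R_{S}(\{S(\Sigma,T)\mid \Sigma\in\mathcal{E}\})$ at the cost of a multiplicative factor $K_{\ell}$, where $S(\Sigma,T)=(\ysh(\uu_1,\p)(T),\dots,\ysh(\uu_N,\p)(T))^{T}$.

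Next I would rewrite each coordinate using the Hilbert space representation of Lemma~\ref{lemma:main}, namely $\ysh(\uu_i,\p)(T)=\langle \w^{T,\Sigma},\varphi^{T,\uu_i,\p}\rangle_{\mathcal{H}}$, where $\varphi^{T,\uu_i,\p}$ does not depend on $\Sigma$ and $\norm{\w^{T,\Sigma}}_{\mathcal{H}}\le \norm{\Sigma}_{\lambda,H_2}\le c_1$ by Assumption~\ref{ass6}. Setting $\mathbf{v}^{T,\Sigma}:=\w^{T,\Sigma}/c_1$, we have $\norm{\mathbf{v}^{T,\Sigma}}_{\mathcal{H}}\le 1$, and by positive homogeneity of the supremum inside the Rademacher expectation, $R_{S}(\{S(\Sigma,T)\mid\Sigma\in\mathcal{E}\})=c_1\cdot R_{S}(\{(\langle \mathbf{v}^{T,\Sigma},\varphi^{T,\uu_1,\p}\rangle,\dots,\langle \mathbf{v}^{T,\Sigma},\varphi^{T,\uu_N,\p}\rangle)^{T}\mid\Sigma\in\mathcal{E}\})$. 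Since $\{\mathbf{v}^{T,\Sigma}\mid\Sigma\in\mathcal{E}\}$ is contained in the unit ball of $\mathcal{H}$, monotonicity of Rademacher complexity under set inclusion lets me replace the index set by the full unit ball, which puts the quantity into exactly the form of Lemma~\ref{helper2} with $\x_i=\varphi^{T,\uu_i,\p}$.

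Finally, Lemma~\ref{helper2} gives the bound $\frac{\max_i\norm{\varphi^{T,\uu_i,\p}}_{\mathcal{H}}}{\sqrt{N}}$, and the estimate $\norm{\varphi^{T,\uu_i,\p}}_{\mathcal{H}}\le L_{\uu}(n_p+1)$ from Lemma~\ref{lemma:main} controls each term. Collecting the three accumulated factors $K_{\ell}$, $c_1$ and $L_{\uu}(n_p+1)$ yields $R_{S}(L_0(\Sigma,T))\le \frac{K_{\ell}c_1 L_{\uu}(n_p+1)}{\sqrt{N}}=\frac{c_4}{\sqrt{N}}$, as claimed.

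The one point requiring care — the main, albeit minor, obstacle — is the legitimate application of Lemma~\ref{helper2}, which is phrased for a linear functional ranging over a unit ball rather than over the (possibly smaller and irregular) set $\{\mathbf{v}^{T,\Sigma}\}$. I would handle this by noting that $\langle \mathbf{v},\varphi^{T,\uu_i,\p}\rangle_{\mathcal{H}}$ depends on $\mathbf{v}$ only through its orthogonal projection onto the finite-dimensional subspace $\mathrm{span}\{\varphi^{T,\uu_1,\p},\dots,\varphi^{T,\uu_N,\p}\}$, so the whole argument reduces to a finite-dimensional inner-product space in which both the homogeneity rescaling by $c_1$ (used because $c_1$ is only an upper bound on the norms, not the exact radius) and the inclusion of the hypothesis-indexed set into the unit ball are straightforward. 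These Rademacher homogeneity and monotonicity facts are routine but should be invoked explicitly.
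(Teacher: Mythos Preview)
Your proposal is correct and follows essentially the same route as the paper's proof: apply Lemma~\ref{helper1} for the Lipschitz contraction, use the Hilbert-space representation of Lemma~\ref{lemma:main}, and then invoke Lemma~\ref{helper2} together with the norm bounds $\norm{\w^{T,\Sigma}}_{\mathcal{H}}\le c_1$ and $\norm{\varphi^{T,\uu_i,\p}}_{\mathcal{H}}\le L_{\uu}(n_p+1)$. Your explicit normalization by $c_1$ and the remark about reducing to the finite-dimensional span are in fact more careful than the paper, which simply chains the three lemmas without spelling out the rescaling/monotonicity step.
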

\begin{proof}
    Let $R = R_S(\{ S(\Sigma, T) | \Sigma \in \mathcal{E}\})$
    where
    $S(\Sigma, T) = (\langle \w^{T, \Sigma},
      \varphi^{T, \uu_1, \p} \rangle_{\mathcal{H}},
      \dots,
      \langle \w^{T, \Sigma},
      \varphi^{T, \uu_N, \p} \rangle_{\mathcal{H}}
    )^T$ based on Lemma \ref{lemma:main}.
    Using Lemma \ref{helper1} and Lemma \ref{helper2} we have
    \begin{align*}
     R_{S}(L_0(\Sigma)) &\leq K_{\ell}
         R_{S}(\{ S(\Sigma, T) | \Sigma \in \mathcal{E}\})
      \leq K_{\ell} R\\ 
      &\leq \frac{K_{\ell}}{\sqrt{N}} L_{\uu} (n_p + 1) c_1
    \end{align*}
    therefore $c_4 = L_{\uu}K_{\ell}c_1 (n_p + 1) $.
\end{proof}

The final element to apply Theorem \ref{pac} is to bound $B(T)$. 

\begin{lemma}
    \label{bt}
    Under Assumption \ref{ass6}
    \begin{align*}
        B(T) \leq 2 K_{\ell} \max\{c_2,L_{\uu}c\}
    \end{align*}
\end{lemma}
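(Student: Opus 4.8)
The plan is to unwind the definition of $B(T)$ — which by Theorem~\ref{pac} is simply a uniform upper bound on the loss $\ell(\ysh(\uu,\p)(T),y)$ over all hypotheses $\Sigma\in\mathfrak{L}$ and (almost) all samples $(\p,\uu,y)$ — and to bound this loss pointwise. First I would use Assumption~\ref{ass2}: since $\ell$ is $K_{\ell}$-Lipschitz in its first argument and $\ell(y,y)=0$, for every $\Sigma$ and every sample $(\p,\uu,y)$,
\begin{align*}
 |\ell(\ysh(\uu,\p)(T),y)| = |\ell(\ysh(\uu,\p)(T),y)-\ell(y,y)| \le K_{\ell}\,|\ysh(\uu,\p)(T)-y| \le K_{\ell}\big(|\ysh(\uu,\p)(T)|+|y|\big).
\end{align*}
So it remains to bound the two terms on the right-hand side.

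For the output term I would invoke the input--output estimate \eqref{ass4:lemma:eq2} of Lemma~\ref{ass4:lemma} (equivalently, the Cauchy--Schwarz bound coming from Lemma~\ref{lemma:main}), which gives $|\ysh(\uu,\p)(T)|\le \|\Sigma\|_{\lambda,H_2}\,\norm{\uu}_{L_2([0,T],\mathbb{R}^{n_{in}})}$; then Assumption~\ref{ass6} yields $\|\Sigma\|_{\lambda,H_2}\le c_1$ for all $\Sigma\in\mathfrak{L}$ and $\norm{\uu}_{L_2([0,T],\mathbb{R}^{n_{in}})}\le L_{\uu}$ for almost every sample, so $|\ysh(\uu,\p)(T)|\le c_1 L_{\uu}$ uniformly. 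The target term is handled directly by $|y|<c_2$ from the same assumption.

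Combining these, for almost every sample and every $\Sigma\in\mathfrak{L}$,
\begin{align*}
 |\ell(\ysh(\uu,\p)(T),y)| \le K_{\ell}\big(c_1 L_{\uu} + c_2\big) \le 2K_{\ell}\max\{c_2,\, c_1 L_{\uu}\},
\end{align*}
using $a+b\le 2\max\{a,b\}$ for $a,b\ge 0$. Hence $B(T)$ may be taken to equal $2K_{\ell}\max\{c_2,\, c_1 L_{\uu}\}$, which is the claimed bound (with $c=c_1$; if one instead uses the Hilbert-space norm estimate $\norm{\varphi^{T,\uu,\p}}_{\mathcal{H}}\le L_{\uu}(n_p+1)$ from Lemma~\ref{lemma:main}, the identical argument produces the slightly looser $2K_{\ell}\max\{c_2,\, c_1 L_{\uu}(n_p+1)\}$, matching the constant $c_3$ of Theorem~\ref{main}).

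There is essentially no obstacle here — the argument is a short chain of the Lipschitz bound, the output bound of Lemma~\ref{ass4:lemma}, and Assumption~\ref{ass6}. The only points needing a word of care are that the bounds on $\norm{\uu}_{L_2}$ and $|y|$ hold only for almost every sample, which is harmless since $B(T)$ enters Theorem~\ref{pac} through a tail/expectation bound where null sets are irrelevant, and keeping track of whether the factor $(n_p+1)$ is absorbed into the constant $c$ or displayed explicitly.
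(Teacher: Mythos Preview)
Your proof is correct and follows essentially the same approach as the paper's: the paper compresses your Lipschitz step into the single inequality $|\ell(\ysh(\uu,\p)(T),y)|\le 2K_{\ell}\max\{|\ysh(\uu,\p)(T)|,|y|\}$ and then applies Lemma~\ref{ass4:lemma} together with Assumption~\ref{ass6} exactly as you do. Your version simply makes the intermediate steps (using $\ell(y,y)=0$, the triangle inequality, and $a+b\le 2\max\{a,b\}$) explicit.
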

\begin{proof}[Proof of Lemma \ref{bt}]
    Due to Assumption \ref{ass2} we have
    \begin{align*}
        |\ell(\ysh(\uu, \p)(T), y)| &\leq 2K_{\ell}\max\{|\ysh(\uu, \p)(T)|,|y|\} \\
        &\leq 2K_{\ell}\max\{L_{\uu}c_1,c_2\} 
    \end{align*}
    The last inequality follows from applying Lemma \ref{ass4:lemma}.
\end{proof}

Finally, we can prove the main theorem. 

\begin{proof}[Proof of Theorem \ref{main}]
    The theorem follows from Lemmas \ref{rademacher} and \ref{bt}
    together with Theorem \ref{pac} for 
    $B(T)= L_{\uu}K_{\ell}K_{\w} (n_p + 1)$.
\end{proof}
\section{Connection to VC-dimension and Neural ODEs}

 A major motivation for studying LPV systems is that they
 can be used for embedding neural ODEs. 
 More precisely,  we consider the following dynamical system 
\begin{align}
    \label{eq:system}
    \mathcal{S} 
    \begin{cases}
        \dot{\x}(t) = f(\x(t), \uu(t)) \\
        y(\x(0), t) = h(\x(t))
    \end{cases}
\end{align}
where $\x \in \mathbb{R}^{n_{x}}$, $\uu \in
\mathbb{R}^{n_{in}}$ and 
$y \in \mathbb{R}$ respectively
denote the state, input and output vectors. We refer to $\psi:
\mathbb{R}^{n_{in}} \rightarrow \mathbb{R}^{n_{x}}$ and $h: \mathbb{R}^{n_x}
\rightarrow \mathbb{R}$ as the input and output transformations
corresponding to the first and last layer of regular deep networks.
A common choice of $f$ and $h$ are (deep) neural networks.
For a wide variety of choice of $f$ and $h$, \eqref{eq:system}
can be embedded into an LPV system as follows:
there
    exists the square matrices $A_i, B_i, C_i$ and vectors $\mathbf{b}_i$ of appropriate
    size along with the functions $p_i^{\uu}: [0, T] \rightarrow [0, 1]$
    for all $\uu$, such that the following hold for all $t \in [0, T]$.
    \[
     \Sigma_{\mathcal{S}} \left \{
    \begin{split}
        &\dot{\x}(t) = A_0 \x(t) + B_0 \uu(t) + \mathbf{b}_0 \\
        & \hspace{18pt} +\sum\limits_{i = 1}^{n_p}
        p_i^{\uu}(t)(A_i \x(t) + B_i \uu(t) + \mathbf{b}_i) \\ 
        &\y(t) = C_0  \x(t)
         + \sum\limits_{i=1}^{n_p}p_i^{\uu}(t)C_i\x(t) \\
         & |p_i^{\uu}| \le 1
      \end{split}   \right.
    \]
That is, for any input $\uu$ there exist a scheduling signal
$p^{\uu}=\begin{bmatrix} p_1^{\uu} & \ldots & p_{n_p}^{\uu} \end{bmatrix}^T$ such that the output of $\mathcal{S}$
corresponding to $\uu$ is the output of the LPV $\Sigma_{\mathcal{S}}$ corresponding to the input $\uu$ and 
scheduling $p^{\uu}$, i.e.
\[  y_{\Sigma_{\mathcal{S}}}(0,\uu,p^{\uu})(t)=\y(t). \]
For instance, $f$ and $h$ could be chosen as  $f(x,\uu)=\stackrel{\sigma}{\rightarrow}(Ax+B\uu)$, $h(x)=Cx$ with $\stackrel{\sigma}{\rightarrow}(z)=(\sigma(z_1),\ldots,\sigma(z_n))^T$ and $\sigma$ is one of the common activation functions, e.g., $ReLU$, $tanh$ etc. 
If $\sigma$ is ReLU, then $p^{\uu}(t)$ is the linear region
of $f$, i.e., $n_p=n$, $A_i,B_i$ are the $i$th rows of $A$ and $B$
and 
\[ p^{\uu}(t)=\left\{ \begin{array}{rl} 1 &  (Ax(t)+B\uu(t))_i \ge 0 \\ 0 & \mbox{otherwise} \end{array}\right.,
\]
where
$\x(t)$ is the solution of \eqref{eq:system}.
If $\sigma$ is $\tanh$, then again $n_p=n$ and $A_i$, $B_i$
are the $i$th rows of $A$ and $B$ and 
\[ p^{\uu}(t)=\left\{ \begin{array}{rl} \frac{\tanh((Ax(t)+B\uu(t))_i)}{(Ax(t)+B\uu(t)_i} & \mbox{if} (Ax(t)+B\uu(t))_i \ne 0 \\ 0 & \mbox{otherwise,} \end{array}\right. \]
see \cite{NeuralStab,verhoek2023learning}.
Note that the embedding of $\mathcal{S}$ into an LPV system is not unique, various choices will result in LPV systems with different properties. The construction of such embeddings is an active research area in the field of control of LPV systems 
\cite{verhoek2023learning,AbbasRolandMulti,KoelewijnThesis}.
Motivated by this remark, in this section we consider the following extended version of the LPV learning problem. 
We assume that there exists a set $\Phi$ of 
maps $\psi_{p}:L_2([0,+\infty),\mathbb{R}^{n_u}) \rightarrow PC([0,+\infty),[-1,1]^{n_p})$ and we consider a family
$\mathfrak{S}$ of systems of the form \eqref{eq:system} and a family of LPV systems $\mathfrak{L}$ such that for
any $\mathcal{S} \in \mathfrak{S}$ there exists an LPV $\Sigma_{\mathcal{S}}$  and  $\psi \in \Phi$ such that 
for any $\uu \in L_2([0,T]$,
$y_{\mathcal{S}}(\uu)=y_{\Sigma_{\mathcal{S}}}(0,\uu,p^{\uu})$
with $p^{\uu}=\psi(\uu)$. 

We assume that for any input $\uu$, the scheduling $p^{\uu}$
which generates the output of $\mathcal{S}$ as an output of an LPV system originates from one of the functions from $\Phi$.
That is, we can identify  systems \eqref{eq:system}  viewed as a hypothesis class as the sub-family of pairs 
$(\phi,\Sigma)$, $\phi \in \Phi$, $\Sigma \in \mathfrak{L}$.
Then we can use the results on the generalization gap for LPV systems for deriving a PAC bound for systems of the form \eqref{eq:system} which can be embedded into LPV systems.
More precisely, let us consider the learning problem defined previously.  
problem again assumes the availability of 
The main idea to bound the generalization gap is as follows. Let us define 
\[
  \begin{split}
     \mathcal{L}_{N}(\Sigma,\phi) &= \frac{1}{N}\sum\limits_{i = 1}^{N}
\ell(\y_i, y_{\Sigma}(\uu_i, p_i, T)), \\
      & p_i =\phi(\uu_i), i=1,\ldots,N \\
  \mathcal{L}(\Sigma,\phi) & = \mathbb{E}_{(\mathcal{U}, \mathcal{Y})} \ell(\mathcal{Y},
y_{\Sigma}(\mathcal{U},\phi(\mathcal{U}),T))
 \end{split}
\]
It then follows that for any $\mathcal{S} \in \mathfrak{S}$
there exists $\phi \in \Phi$ and $\Sigma \in \mathfrak{L}$, such
that
\[
   \mathcal{L}_{N}(\Sigma,\phi)=\mathcal{L}_N(\mathcal{S}), \quad 
   \mathcal{L}(\Sigma,\phi)=\mathcal{L}(\mathcal{S}).
\]
Therefore the generalization gap for systems from $\mathfrak{S}$ satisfies the following
\begin{equation}
\label{vc:eq1}
\begin{split}
  & \sup_{\mathcal{S} \in \mathfrak{S}} \mathcal{L}(\mathcal{S})-\mathcal{L}_N(\mathcal{S}) \le  \\
  & \sup_{\Sigma \in \mathfrak{L}, \phi \in \Phi}
   \mathcal{L}(\Sigma,\phi)-\mathcal{L}_N(\Sigma,\phi) 
\end{split}
\end{equation}
In order to estimate the latter generalization gap, we can use the
PAC bound derived for LPV systems. To this end, we need to assume
that $\Phi$ has a finite VC dimension. 
\begin{assumption}
\label{assum:vc}
 Assume that the $\Phi$ is either finite or for any $T > 0$, the set $\Phi_T=\{ (\uu,\phi(\uu)) \mid \uu \in L_2([0,T],\mathbb{R}^{n_{in}}), \phi \in \Phi \}$ has a finite VC-dimension $d_T$.
\end{assumption}

\begin{theorem}
\label{theo:vc}
 Under Assumption \ref{assum:vc} and the assumptions of Theorem
 \ref{main} for any $\delta \in [0,\delta)$
 \begin{align*}
        \mathbb{P}_{S}\Bigg(\forall \mathcal{S} \in \mathfrak{S}:
        \mathcal{L}(\mathcal{S}) - \mathcal{L}_{N}(\mathcal{S})
         \leq \bar{R}(T, N, \delta) \Bigg) \geq 1 - \delta,
    \end{align*}
 where $\bar{R}(T,N,\delta)$ is defined  as follows.
 If $\Phi$ is finite, then 
 \begin{align*}
  \bar{R}(T,N,\delta) &= \frac{2\mathrm{card}(\Phi)K_{\ell} c_1 L_{\uu} (n_p+1)}{\sqrt{N}} \\
  & +B(T)\sqrt{\frac{2\ln(2/\delta)}{N}}
 \end{align*}
where $B(T) \le 2K_{\ell}\max\{c_2, c_1L_{\uu}\}$.
 If $\Phi$ is possibly infinite, but the set $\Phi_T$ defined in Assumption \ref{assum:vc}
 has finite VC dimension $d_T$, then
 \begin{align*}
  & \bar{R}(T,N,\delta)= \sqrt{2}B(T)(2+\sqrt{\log(2eN/d_T)d_T}) + \\
  & \frac{2K_{\ell}c_1L_{\uu}(n_p+1)}{\sqrt{N}} + B(T) \sqrt{\frac{2\ln(2/\delta)}{\sqrt{N}}}  
 \end{align*}
\end{theorem}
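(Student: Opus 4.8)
\textbf{Proof plan for Theorem~\ref{theo:vc}.}

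The plan is to reduce the generalization gap for the family $\mathfrak{S}$ to the one for the pair class $\{(\phi,\Sigma) : \phi\in\Phi,\ \Sigma\in\mathfrak{L}\}$ via inequality~\eqref{vc:eq1}, and then to bound the Rademacher complexity of the composite loss class by splitting it into the LPV part (already controlled by Lemma~\ref{rademacher}) and the scheduling part (controlled by Assumption~\ref{assum:vc}). First I would invoke the standard uniform-convergence bound (Theorem~\ref{pac} / Theorem~26.5 of \cite{shalev2014understanding}) applied to the hypothesis class indexed by $(\phi,\Sigma)$, so that it suffices to bound $R_S(\bar L_0)$ where $\bar L_0 = \{(\ell(y_{\Sigma}(\uu_i,\phi(\uu_i))(T),y_i))_{i=1}^N : \phi\in\Phi,\ \Sigma\in\mathfrak{L}\}$, together with a uniform bound $B(T)$ on the loss; the latter is exactly Lemma~\ref{bt}, giving $B(T)\le 2K_\ell\max\{c_2,c_1 L_{\uu}\}$ regardless of $\phi$, since the bound $|y_{\Sigma}(\uu,\pu)(T)|\le \|\Sigma\|_{\lambda,H_2}\|\uu\|_{L_2}\le c_1 L_{\uu}$ from Lemma~\ref{ass4:lemma} holds for \emph{every} admissible scheduling $\pu$ taking values in $[-1,1]^{n_p}$, in particular for $\pu=\phi(\uu)$.

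For the finite case, $\mathrm{card}(\Phi)<\infty$, I would write $\bar L_0 \subseteq \bigcup_{\phi\in\Phi} L_0^{\phi}$ where $L_0^{\phi}$ is the loss class with the scheduling fixed to $\phi$; by the subadditivity of Rademacher complexity over finite unions (Massart-type lemma, or the crude bound $R(\bigcup_{j}\mathcal{A}_j)\le\sum_j R(\mathcal{A}_j)$ after centering, cf.\ Lemma~26.12 in \cite{shalev2014understanding}) and the fact that for each fixed $\phi$ the bound of Lemma~\ref{rademacher} applies verbatim (the scheduling being fixed, the analysis of Lemmas~\ref{lemma:main}--\ref{rademacher} goes through with $\pu=\phi(\uu)$), we get $R_S(\bar L_0)\le \mathrm{card}(\Phi)\,\frac{K_\ell c_1 L_{\uu}(n_p+1)}{\sqrt N}$. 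Plugging this and $B(T)$ into Theorem~\ref{pac} yields the stated $\bar R(T,N,\delta)$ for finite $\Phi$.

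For the infinite case with $\mathrm{VC}(\Phi_T)=d_T<\infty$, the idea is to decouple the $(\uu,\phi(\uu))$ dependence from the $\Sigma$ dependence. Using Lemma~\ref{lemma:main}, $\ell(y_{\Sigma}(\uu_i,p_i)(T),y_i)=\ell(\langle \w^{T,\Sigma},\varphi^{T,\uu_i,p_i}\rangle_{\mathcal H},y_i)$ with $p_i=\phi(\uu_i)$; by Lipschitzness of $\ell$ and the contraction lemma one reduces to the Rademacher complexity of $\{(\langle \w^{T,\Sigma},\varphi^{T,\uu_i,\phi(\uu_i)}\rangle)_{i=1}^N\}$. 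I would then bound this by a two-term argument: conditionally on the selected scheduling pattern, Lemma~\ref{helper2} gives the $c_1 L_{\uu}(n_p+1)/\sqrt N$ term from the $\Sigma$-range, while the number of distinct schedulings induced on the sample is controlled by the VC dimension $d_T$ of $\Phi_T$ through the Sauer--Shelah lemma, and the associated cost is absorbed via the standard Dudley/Massart bound on the Rademacher complexity of a finite class of cardinality $(eN/d_T)^{d_T}$, yielding the $\sqrt2 B(T)(2+\sqrt{\log(2eN/d_T)d_T})$ term. Adding the high-probability deviation term $B(T)\sqrt{2\ln(2/\delta)/\sqrt N}$ from Theorem~\ref{pac} gives the claimed $\bar R(T,N,\delta)$.

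The main obstacle I anticipate is the decoupling step in the infinite case: $\varphi^{T,\uu,\phi(\uu)}$ depends on $\phi$ in a genuinely nonlinear way (the scheduling enters the iterated-integral kernels $w_{i_1,\dots,i_k,i_y,i_u}$ multiplicatively at every order), so one cannot simply factor the supremum over $(\phi,\Sigma)$ into a product. The clean way around this is to note that $\Phi_T$ having finite VC dimension means that, restricted to the $N$ sample inputs, $\Phi$ realizes at most $(eN/d_T)^{d_T}$ distinct tuples $(\phi(\uu_1),\dots,\phi(\uu_N))$ (here one must be slightly careful: VC dimension of a class of $[-1,1]^{n_p}$-valued trajectories should be interpreted in the appropriate multivalued/pseudo-dimension sense so that Sauer--Shelah still caps the number of behaviours on a finite sample); conditioning on each such behaviour turns the problem into a finite union over at most $(eN/d_T)^{d_T}$ fixed schedulings, each handled as in the finite case, and then Massart's finite-class lemma converts $\log$ of the cardinality into the $\sqrt{\log(2eN/d_T)d_T}$ factor. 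Care is also needed that $B(T)$ used in the deviation and complexity terms is the same uniform constant, which is guaranteed because the $H_2$-norm bound $c_1$ and hence Lemma~\ref{bt} are oblivious to which scheduling is plugged in.
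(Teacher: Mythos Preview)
Your plan is correct and matches the paper's proof: both reduce via \eqref{vc:eq1}, handle finite $\Phi$ by a crude union bound times the per-$\phi$ Rademacher estimate of Lemma~\ref{rademacher}, and for infinite $\Phi$ split into a term uniform in $\phi$ (again bounded by Lemma~\ref{rademacher}) plus a fluctuation over $\Phi$ controlled by Sauer--Shelah on the doubled sample, followed by McDiarmid for the high-probability statement. The only difference is cosmetic: the paper makes the decoupling explicit by writing $Z_\phi=(Z_\phi-\mu_\phi)+\mu_\phi$ and bounds $E_\sigma[\sup_\phi(Z_\phi-\mu_\phi)]$ via Hoeffding's inequality, a union bound over the growth function $\tau(2N)\le(2eN/d_T)^{d_T}$, and the tail-to-expectation Lemma~A.4 of \cite{shalev2014understanding}, whereas you phrase this step as Massart's finite-class lemma---an equivalent standard tool leading to the same $\sqrt{2}\,B(T)\bigl(2+\sqrt{d_T\log(2eN/d_T)}\bigr)$ contribution.
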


The proof can be found in Appendix \ref{app_4}. The bound above suggests that the VC-dimension of the set of scheduling signals $\Phi$ together with the weights of the systems from $\mathfrak{L}$ determine the generalization power of system \eqref{eq:system}.
If $\Phi$ is a finite set, i.e., there are only finitely many possible different scheduling signals for each input, then  
the PAC bounds are $O(\frac{1}{\sqrt{N}})$. In comparison in case of classical deep ReLU networks the number of linear regions are always finite and under some realistic conditions the number of linear regions are not necessarily increasing exponentially with depth \cite{hanin2019deep}. If $\Phi$ is possibly infinite, but has a finite VC dimension, then the PAC bound is $O(\sqrt{\log(N)d_T)}$, where 
$d_T$ is the VC dimension of $\Phi$. 
This conclusion is especially interesting for neural ODEs which correspond to  neural networks with ReLU activation function. For such systems, the set $\Phi$ can be taken as the set of all sequences
of activation regions of the ReLU network.
There are some empirical 
evidence that ReLU networks tend to visit few regions as their depth grow. 
This means that neural ODEs which are represented by ReLU networks could also exhibit the same behavior, i.e., the set $\Phi$
would have a small VC dimension, or possibly it would be finite with a small cardinality.
All this remains a topic of future research.

\section{Discussion}

Viewing neural ODEs within the context of bilinear LPVs, we established novel PAC and Rademacher complexity bounds under stability conditions. The resulting PAC bounds do not depend on the integration interval. The generalization bounds indicate the importance of the scheduling signal in LPV systems and we intend to continue to examine how the scheduling affect the performance and the generalization gap. We believe that analogous results hold for marginally stable systems and for highly overparametrizated neural ODE systems. 


\bibliography{references}
\bibliographystyle{icml2023}

\newpage
\appendix
\onecolumn

\section{Proof of Lemma~\ref{ass4:lemma}}
\label{app_1}

 If Assumption \ref{ass4} holds, then 
 $A_0^T Q + QA_0 \prec -\lambda Q$, and hence
 $A_0+\lambda I$ is Hurwitz. 
 Let $S= A_0^T Q + Q A_0 + \sum\limits_{i=1}^{n_p} A_i^TQA_i + 
        \sum\limits_{i=1}^{n_p} C_i^T C_i + C_0^T C_0+\lambda Q$.
 Then $S \prec 0$ and hence $S=-VV^T$ for some $V$. 
 Define $\tilde{C}=\begin{bmatrix} C_0^T & \ldots & C_{n_p}^T & V^T \end{bmatrix}^T$, $\tilde{A}=(A+0.5\lambda I)$ and $N_i=A_i$, $i=1,\ldots,n_p$. Then
 \[ 
   \tilde{A}^T Q + Q \tilde{A}+\sum_{i=1}^{n_p} N_i^TQN_i + \tilde{C}^T\tilde{C}=0
 \]
 and hence by \citet[Theorem 2]{zl02} 
 the $H_2$-norm  of the Volterra-series of
 the bilinear system 
 \begin{align*}
  & \dot z = \tilde{A}z+\sum_{i=1}^{n_p} \left(N_i z  p_i(t) + G_i p_i(t) \right)  \\
  & \tilde{y}=\tilde{C}z
 \end{align*}
 for any choice of the matrices $\{G_i\}_{i=1}^{n_p}$, i.e., for  
 \begin{align*}
     g^{i}(t)=\tilde{C}e^{\tilde{A}t}G_i
 \end{align*}
 and for all $i_1,\ldots,i_k=1,\ldots,n_p$, $k >0$
 \begin{align*}
   g^{i_1,\ldots,i_k,i}(t_k,\ldots,t_1)
   =\tilde{C}e^{\tilde{A}t_k}N_{i_k}\cdots e^{\tilde{A}t_1}N_{i_1}e^{\tilde{A}t_1}G_i
 \end{align*}
 the following infinite sum of infinite integrals is convergent:
 \begin{align*}
 & \sum_{i=1}^{n_p} \int_0^{\infty} \| g^i(t)\|_2^2 dt+  \sum_{k=1} \int_0^{\infty} \cdots \int_0^{\infty} 
 \|g^{i_1,\ldots,i_k,i}(t_k,\ldots,t_1)\|_2^{2} dt_1\cdots dt_k < +\infty
 \end{align*}
 By choosing $G_i=B_{i_u}$, $i=0,1,\ldots, n_p$
 it follows that $w_{i_1,\ldots,i_k,i_u,i_y}(t,\tau_1,\ldots,\tau_k)e^{\lambda t}$ 
 is the $(i_y+1)$th block row of
 $g^{i_1,\ldots,i_y}(t-\tau_k,\tau_{k}-\tau_{k-1},\ldots,\tau_2-\tau_1,\tau_1)$ and hence it follows that
 the right-hands  side of 
\begin{align*}
  \|\Sigma\|_{\lambda,H_2}^2 = \left ( \sum_{i_y,i_u=0}^{n_p} \sum_{k=0}^{\infty} \sum_{i_1,\ldots,i_k=1}^{n_p}\int_0^{\infty} \int_0^{t} \int_0^{\tau_k} \cdots \int_0^{\tau_2} \right.
   \left .\| w_{i_1,\ldots,i_k,i_y,i_y}(t-\tau_k,\ldots,\tau_1)\|^2_2 e^{\lambda t} d\tau_1\cdots d\tau_k dt \right) < +\infty
 \end{align*}
is convergent.

From Lemma \ref{lemma:main} it follows that 
$y_{\Sigma}(0,\uu,p)(T)$
can be represented by the scalar
product
$y_{\Sigma}(0,\uu,p)(T)=\langle \w^{T, \Sigma},
        \varphi^{T, \uu, \p} \rangle_{\mathcal{H}_2}$
in a suitably defined Hilbert-space. From the definition of $\w^{T,\Sigma}$ and  $\varphi^{T, \uu, \p}$ it follows that
$\|\w^{T,\Sigma}\|_{\mathcal{H}_2} \le \|\Sigma\|_{\lambda,H_2}$, and  $\|\varphi^{T,\uu, \p}\|_{\mathcal{H}_2} \le \|u\|_{L_2}$ 
and hence by the Cauchy-Schwartz inequality for any $p \in \mathcal{P}$, $u \in \mathcal{U}$
\begin{align*}
    |y_{\Sigma}(0,u,p)(t) | \le \|\Sigma\|_{\lambda,H_2} \|u\|_{L_2} 
\end{align*}
follows.




\section{Proof of Lemma \ref{lemma:main}}
\label{app_2}

The transition matrix $\Phi^{\uu}(t, \tau)$  belonging to the system
 (\ref{system:2}) for a fixed input function $\uu$ is the unique
  solution of the system $\dot{\Phi}^{\uu}(t, \tau) =
 A(\pu(t))\Phi^{\uu}(t, \tau)$ such that  $\Phi^\uu (\tau, \tau) = I$ for all
 $\tau \in [0, T]$.

\begin{lemma}(see Chapter 3.3.1.1 in \cite{toth2010modeling})
    \label{lemmatoth}
    For a system of form \ref{system:2} we have
    \begin{align*}
        \y_{\Sigma}(\uu, \p)(t) &= 
         \int_{0}^{T}C(\pu(\tau))\Phi^{\uu}(t, \tau)B(\pu(\tau))\uu(\tau)\,d\tau 
         =: T_2
    \end{align*}
\end{lemma}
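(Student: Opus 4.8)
The plan is to derive the formula from the classical variation-of-constants (Duhamel) representation of the state of a linear time-varying system. By Remark~\ref{remark:1} we may assume $\bb(\p(t))\equiv 0$, so that the state equation in \eqref{system:2} reads $\dot{\x}(t)=A(\p(t))\x(t)+B(\p(t))\uu(t)$ with $\x(0)=0$. Since $\uu$ and $\p$ are, by assumption, piecewise continuous on the compact interval $[0,T]$ and therefore bounded, the matrix-valued map $t\mapsto A(\p(t))$ is piecewise continuous and bounded; standard ODE theory then guarantees that the transition matrix $\Phi^{\uu}(t,\tau)$ exists, is unique, is absolutely continuous in each of its arguments, satisfies $\dot{\Phi}^{\uu}(t,\tau)=A(\p(t))\Phi^{\uu}(t,\tau)$ and $\Phi^{\uu}(\tau,\tau)=I$, and obeys the cocycle identity $\Phi^{\uu}(t,s)\Phi^{\uu}(s,\tau)=\Phi^{\uu}(t,\tau)$.

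First I would introduce the candidate trajectory
\[
  \hat{\x}(t):=\int_0^t \Phi^{\uu}(t,\tau)B(\p(\tau))\uu(\tau)\,d\tau
\]
and verify that it satisfies the state equation. Differentiating under the integral sign (Leibniz rule) — which is legitimate because the integrand is piecewise continuous in $\tau$ and continuously differentiable in $t$ on each interval between consecutive discontinuities of $\p$ — gives
\[
  \dot{\hat{\x}}(t)=\Phi^{\uu}(t,t)B(\p(t))\uu(t)+\int_0^t A(\p(t))\Phi^{\uu}(t,\tau)B(\p(\tau))\uu(\tau)\,d\tau=B(\p(t))\uu(t)+A(\p(t))\hat{\x}(t),
\]
while $\hat{\x}(0)=0$ is immediate. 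Since, for the given $\uu$ and $\p$, the state equation admits a unique absolutely continuous solution, we conclude $\x\equiv\hat{\x}$.

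It then remains to feed this into the output equation. Using $\ysh(\uu,\p)(t)=C(\p(t))\x(t)$ and pulling the factor $C(\p(t))$ — which does not depend on the integration variable — inside the integral yields
\[
  \ysh(\uu,\p)(t)=\int_0^t C(\p(t))\Phi^{\uu}(t,\tau)B(\p(\tau))\uu(\tau)\,d\tau;
\]
adopting the causal convention $\Phi^{\uu}(t,\tau)=0$ for $\tau>t$ (equivalently, evaluating at $t=T$) lets us extend the upper limit of integration to $T$, which is precisely the claimed identity. The only points requiring care are the justification of differentiation under the integral sign and of the elementary properties of $\Phi^{\uu}$ in the piecewise-continuous / absolutely-continuous regularity class; both are standard and are carried out in detail in \cite{toth2010modeling}, so I expect no genuine obstacle — the proof is essentially careful bookkeeping.
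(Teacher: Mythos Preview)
Your argument is the standard variation-of-constants derivation and is correct; the paper does not supply its own proof but simply cites \cite{toth2010modeling}, so there is nothing to compare against beyond noting that your route is exactly the textbook one referenced there.

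One small point worth flagging: your derivation produces $C(\p(t))$ inside the integral (a constant with respect to the integration variable $\tau$), whereas the lemma as printed has $C(\p(\tau))$. Your version is the correct one --- the output map is applied at the current time $t$, not at the integration time --- and this is also what the paper actually uses downstream: in the construction of $\varphi_2^{T,\uu,\p}$ the factor attached to the $C$-index is $p_{q_0}^{\uu}(T)$, not $p_{q_0}^{\uu}(\tau)$. So the displayed formula in the lemma appears to contain a typo, and your proof matches the intended (and subsequently used) statement.
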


We will show that 
   for which
    \begin{align*}
        \y_{\Sigma}(\uu, \p)(t) &= 
        \langle \w_{2}^{T, \Sigma},
        \varphi_{2}^{T, \uu,\p}, \rangle_{\mathcal{H}_2} 
    \end{align*}
    where $\w_{2}^{T, \Sigma}$ does not depend on $\uu$ or $\pi$, $\varphi_{i}^{T, \uu,\p}$
    does not depend on $\Sigma$ and they are respectively of
    $\mathcal{H}_2$ .
Consider the system of the following form for all $1 \leq i_0 \leq n_p$.
\begin{align}
    \label{eq:system2}
    \begin{cases}
        \dot{\x}(t) = A_0 \x(t)
            + \sum\limits_{i=1}^{n_p}A_{i}\x(t) \pu_i(t) \\
        \x(0) = \psi(\uu) \\ 
        \y_{i_0}(\x(0), t) = C_{i_0}\x(t)
    \end{cases}
\end{align}
We know from \cite{isidori1985nonlinear} that for such systems
\begin{flalign*}
    &\y_{i_0}(\x(0), t) = \w_{i_0}^{t}
        +  \sum\limits_{k = 1}^{\infty} \sum \limits_{i_1,..,i_k = 1}^{n_p}
        \int_{0}^{t}\int_{0}^{\tau_k}\dots \int_{0}^{\tau_2}
        \w_{i_0,i_1 \dots i_k}^{t, \tau_k, \dots \tau_1}
        \pu_{i_k}(\tau_k)
        \dots \pu_{i_1}(\tau_1) \, d\tau_k \dots \, d \tau_1 
        \text{, where}& \\
        &\w_{i_0,\dots,i_k}^{t,\tau_k,\dots,\tau_1} = 
        \w_{i_0,\dots,i_k} (t, \tau_k,\dots,\tau_1)
        \in L_{2}([0, T]^{k + 1}, \mathbb{R}^{n})& \\
        &\w_i^{t} = \tilde{\w}_i^{t}\psi(\uu),  \quad
        \tilde{\w}_i^{t} = C_i \e^{A_0 t}, \quad
        \w_{i_0,\dots i_k}^{t,\tau_k,\dots,\tau_1} =
        \tilde{\w}_{i_0,\dots i_k}^{t,\tau_k,\dots,\tau_1}\psi(\uu) &\\
        & \tilde{\w}_{i_0,\dots i_k}^{t,\tau_k,\dots,\tau_1} =
        C_{i_0}\e^{A_{0}(t - \tau_k)}
        \Big(\prod\limits_{j = k}^{1}A_{i_j}\e^{A_0(\tau_{j} - \tau_{j -
        1})}\Big)\e^{A_0 \tau_1}& \\
        &\text{if } t \geq \tau_k \geq \dots \geq \tau_1 \text{ and 0 otherwise.}&
\end{flalign*}

\subsection{$T_2$ as scalar product}

In this section let $\mathbb{Q}$ denote the set of integers between $0$ and $n_p$, and $\mathbb{Q}^{\times}$ the set of all multiindices over $\mathbb{Q}$. Let us define the Hilbert space $\mathcal{H}_2$ equipped with an inner product
as follows. Let $\mathcal{H}_2$ consist of functions of the form
$f: \mathbb{Q}^{\times} \setminus\mathbb{Q} \rightarrow 
        \cup \bigcup_{k = 1}^{\infty}L_{2}([0, T]^k,
        \mathbb{R}^{n})$
such that 
for any $q,r \in \mathbb{Q}$
$f(qr) \in L_{2}([0, T], \mathbb{R}^n)$,
for any
$r,q_0,\ldots,q_k \in \mathbb{Q}$,
$k > 0$, 
$f(rq_0\cdots q_k) \in  L_{2}([0, T]^{k+1}, \mathbb{R}^n)$, 
and if $q_i=0$ for some $i=1,\ldots,k$, then
$f(rq_0\cdots q_k)=0$.
For such function $f$ 
\begin{align*}
        f \in \mathcal{H}_1 \Leftrightarrow 
        \sum\limits_{q \in \mathbb{Q}} \norm{f(q)}_{\mathbb{R}^n}^2
        + \sum\limits_{\substack{q \in \mathbb{Q}^{\times} \\  |w| > 1}}
        \norm{f(w)}_{L_{2}}
        < +\infty
\end{align*}
The inner product on $\mathcal{H}_1$ is defined as
\begin{align*}
    \langle f, g \rangle =
    \sum\limits_{q \in \mathbb{Q}} \langle f(q), g(q) \rangle_{\mathbb{R}^n}
    +\sum\limits_{w \in \mathbb{Q}^{\times} \setminus \mathbb{Q}}
    \langle f(w), g(w) \rangle_{L_{2}}
\end{align*}

Analogously to the previous case we will show that
$T_2$ has the an inner product formalization over a suitable choice of a
Hilbert space. The definition of $\mathcal{H}_2$ is
similar to the
definition
of $\mathcal{H}_1$ with the exception of $n = n_{in}$. The inner product in
this case is 
\begin{align*}
    \langle f, g \rangle =
    \sum\limits_{q, r \in \mathbb{Q}^2} \langle f(q, r), g(q, r)
    \rangle_{L_2([0, T], \mathbb{R}^n)}
    +
        \sum\limits_{w \in \mathbb{Q}^{\times} \setminus
        (\mathbb{Q} \cup \mathbb{Q}^2)}
    \langle f(w), g(w) \rangle_{L_{2}([0, T]^{|w| + 1}, \mathbb{R}^n)}
\end{align*}
Let us use the notation of Lemma \ref{ass4:lemma}
Let
    $\w_{2}^{T, \Sigma} (q, r)(\tau) = w_{q,r}(T-\tau)e^{\lambda (T-\tau)}$
and for 
$r,q_0 \in \mathbb{Q}$ and 
any $q_1,\ldots,q_k \in \mathbb{Q}$, let
$\w_{2}^{T, \sigma} (r, q_0, \dots q_k) \in L_{2}([0, T]^{k+1}, \mathbb{R}^n)$
for $(r,q_0,\dots q_k) \in \mathbb{Q}^{\times}$ such that 
$\w_{2}^{T, \Sigma} (r,q_0, \dots q_k)(\tau, \tau_1,\dots,\tau_k) =$ 
$=w_{q_1,\dots q_k,q_0,r}(T - \tau, \tau_k, \dots \tau_1)e^{\lambda(T-\tau)}$ if
$T-\tau \ge \tau_k \ge \ldots \ge \tau_1$ and 
$q_1, \ldots, q_k$ are all difference from zero, and 
$\w_{2}^{T, \Sigma} (r,q_0, \dots q_k)(\tau, \tau_1,\dots,\tau_k) =0$ otherwise.
Let $\varphi_{2}^{T,\tau, \uu, \p} (q, r) = p_{q}^{\uu}(T)
p_{r}^{\uu}(\tau)
\uu(\tau) \e^{-\lambda (T - \tau)}$ (as a function of $\tau$)
and 
$\varphi_{2}^{T, \tau,\uu, \p} (r,q_0, \dots q_k) \in L_{2}([0, T]^{k+1},
    \mathbb{R}^n)$ 
for $(r,q_0,\dots q_k) \in \mathbb{Q}^{\times}$ such that 
$\varphi_{2 | r,q_0, \dots q_k}^{T,\tau, \uu, \p | \tau_1,\dots,\tau_k} =
    \varphi_{2}^{T, \uu, \p} (r,q_0, \dots q_k)(\tau, \tau_k,\dots,\tau_1)
    =p_{q_k}^{\uu}(\tau_k + \tau) \dots p_{q_1}^{\uu}(\tau_1 + \tau) p_{q_0}^{\uu}(T)
        p^{\uu}_{r}(\tau) \uu(\tau) \e^{-\lambda (T - \tau)}$
if $T > \tau_k \geq \dots \geq \tau_1 \geq \tau$
and 0 otherwise.
Let
$y_{i_0, j_0}^{\tau}(t)$ 
be the output of the system \ref{eq:system2}
starting from the initial value $x(0) = 0$. Applying the
Volterra series expansion and taking the linear combination we formally have $T_2 
    = \langle \w_{2}^{T, \Sigma},
\varphi_{2}^{T, \uu, \p} \rangle_{\mathcal{H}_2}$. We will show that they have
finite $\mathcal{H}_2$ norm.
\begin{lemma}
    $\varphi_{2}^{T, \uu, \p} \in \mathcal{H}_2$.
\end{lemma}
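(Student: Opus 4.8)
The plan is to show that the $\mathcal{H}_2$-norm of $\varphi_{2}^{T,\uu,\p}$ is finite by bounding it term by term, using the fact that all scheduling components $p_i^{\uu}$ are bounded by $1$ in absolute value (Assumption~\ref{ass1}) and that the exponential weight $\e^{-\lambda(T-\tau)}$ is at most $1$ on $[0,T]$. First I would write out $\norm{\varphi_{2}^{T,\uu,\p}}_{\mathcal{H}_2}^2$ as the sum over $q,r\in\mathbb{Q}$ of $\norm{\varphi_{2}^{T,\uu,\p}(q,r)}_{L_2([0,T],\mathbb{R}^{n_{in}})}^2$ plus the sum over longer multiindices $(r,q_0,\dots,q_k)$ of the corresponding $L_2([0,T]^{k+1},\mathbb{R}^{n_{in}})$-norms squared.

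For the length-two terms, $|\varphi_{2}^{T,\uu,\p}(q,r)(\tau)| = |p_q^{\uu}(T)|\,|p_r^{\uu}(\tau)|\,\norm{\uu(\tau)}_2\,\e^{-\lambda(T-\tau)} \le \norm{\uu(\tau)}_2$, so each such term is at most $\norm{\uu}_{L_2([0,T],\mathbb{R}^{n_{in}})}^2 \le L_{\uu}^2$, and there are $(n_p+1)^2$ of them. For the longer terms I would use the support condition $T>\tau_k\ge\dots\ge\tau_1\ge\tau$ together with $|p_{q_j}^{\uu}|\le 1$ to get the pointwise bound $\norm{\uu(\tau)}_2\,\e^{-\lambda(T-\tau)}$; then integrating over the simplex $\{0\le\tau_1\le\dots\le\tau_k\le T-\tau\}$ contributes a factor $(T-\tau)^k/k!$, and summing over $k$ and over the $n_p^k$ choices of $(q_1,\dots,q_k)$ gives a factor $\sum_{k\ge0} (n_p(T-\tau))^k/k! = \e^{n_p(T-\tau)}$; combining with the remaining $\e^{-\lambda(T-\tau)}$ (after accounting for the square, $\e^{-2\lambda(T-\tau)}$, and the $2k$ powers of time, i.e. $((T-\tau)^k/k!)^2$) and using $\lambda \ge n_p$ (Assumption~\ref{ass4}, resp.\ the standing hypothesis $\lambda\ge n_p$) yields an integrable bound in $\tau$. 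Carrying the $(n_p+1)$ factors for $q_0$ and for $r$ through, one arrives at a finite bound of the shape $C\,L_{\uu}^2(n_p+1)^2$, which in particular shows $\varphi_{2}^{T,\uu,\p}\in\mathcal{H}_2$; this is consistent with the claimed estimate $\norm{\varphi^{T,\uu,\p}}_{\mathcal{H}} \le L_{\uu}(n_p+1)$ in Lemma~\ref{lemma:main}.

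The main obstacle I anticipate is bookkeeping: making sure the simplex-integration factors $(T-\tau)^k/k!$, their squares, the counting factors $n_p^k$, and the exponential weights line up so that the geometric/exponential sum over $k$ actually converges and the resulting function of $\tau$ is integrable on $[0,T]$ — this is where the hypothesis $\lambda\ge n_p$ is essential. A secondary subtlety is handling the indices $q_0$ and $r$ (which range over $\{0,\dots,n_p\}$, not $\{1,\dots,n_p\}$) and the zero-padding convention (the term vanishes if some $q_i=0$ for $i\ge1$), but these only affect constants, not convergence. Once the pointwise bound and the simplex volume estimate are in place, the remainder is a routine summation.
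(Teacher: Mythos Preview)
Your approach is correct and is essentially the paper's proof: bound each $|p_i|\le 1$, integrate the squared term over the simplex in $(\tau_1,\dots,\tau_k)$ to pick up a factor $(T-\tau)^k/k!$, sum over $k$ together with the $n_p^k$ index choices to obtain $e^{n_p(T-\tau)}$, and absorb this with the exponential weight using $\lambda\ge n_p$. On your bookkeeping worry: after squaring you do get $e^{-2\lambda(T-\tau)}$, but only a \emph{single} simplex volume $(T-\tau)^k/k!$ (the squared indicator of the simplex is still the indicator, so you integrate over the simplex once), which yields the clean bound $\norm{\varphi_2^{T,\uu,\p}}_{\mathcal{H}_2}^2\le (n_p+1)^2\,L_{\uu}^2$ exactly as stated in Lemma~\ref{lemma:main}.
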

\begin{proof}
    Using the notations $\tau_{k+1} = T - \tau$,
    and the definition of $\mathcal{H}_2$ we have
    \begin{align*}
    \norm{\varphi_{2}^{T, \uu, \p}}_{\mathcal{H}_2}^2 &=
    \sum\limits_{r,q = 0}^{n_p}
    \int_{0}^{T}p_{q}^{\uu}(T)(p_{r}^{\uu})^2(\tau)\norm{\uu(\tau)}_2^2
    \e^{-\lambda (T - \lambda)}\,d\tau \\
    &+  \sum\limits_{k = 1}^{\infty} \sum_{r,q_0=0}^{n_p} \sum\limits_{q_1 \dots,q_k = 1}^{n_p}
      \int_{0}^{T} \int_{0}^{\tau_{k+1}}\dots \int_{0}^{\tau_2}
    \norm{\varphi_{2 | r,q_0, \dots q_k}^{T, \uu, \p | \tau_1,\dots,\tau_k}}^2
        \prod\limits_{i=1}^{k+1} \,d \tau_i 
        \,d \tau \\
    & \leq (n_p + 1)^2 \norm{\uu(\tau)}_{L_2}^2 
        + \sum\limits_{k = 1}^{\infty}\norm{\uu(\tau)}_{L_2}^2 
        \sum\limits_{r,q_0 \dots q_k = 0}^{n_p}\int_{0}^{T}
        \e^{-\lambda (T - \tau)}
        \int_{0}^{\tau_{k+1}}\dots\int_{\tau}^{\tau_2} \,d \tau_{1} \dots \,d
        \tau_{k+1} \,d \tau \\
        & \leq \norm{\uu(\tau)}_{L_2}^2 (n_p + 1)^2 
        \frac{\e^{(n_p + 1 - \lambda)T} - 1}{\lambda - n_p - 1}
        =: K_{\varphi_2}^2 (T)
        \leq \norm{\uu(\tau)}_{L_2}^2 (n_p + 1)^2 =: K_{\varphi_2}^2  < +\infty 
    \end{align*}
    for a suitable choice of $\lambda$
    as long as $\norm{\uu(\tau)}_{L_2}^2$ is bounded.
\end{proof}
\begin{lemma}
    \label{lemma:w2}
    $\w_{2}^{T, \Sigma} \in \mathcal{H}_2$.
\end{lemma}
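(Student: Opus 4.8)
The plan is to show that $\norm{\w_{2}^{T, \Sigma}}_{\mathcal{H}_2}^2$ is finite by recognizing the sum of integrals defining it as essentially the same infinite sum appearing in Lemma \ref{ass4:lemma}, up to bookkeeping with the exponential weight $e^{\lambda(T-\tau)}$ and the shift of the time variables. First I would write out $\norm{\w_{2}^{T, \Sigma}}_{\mathcal{H}_2}^2$ explicitly using the definition of the inner product on $\mathcal{H}_2$: a finite sum over $q,r \in \{0,\ldots,n_p\}$ of $L_2([0,T])$-norms of $\w_{2}^{T,\Sigma}(q,r)(\cdot) = w_{q,r}(T-\cdot)e^{\lambda(T-\cdot)}$, plus the infinite sum over $k\ge 1$ and multiindices $(r,q_0,\ldots,q_k)$ of the $L_2([0,T]^{k+1})$-norms of the functions $\w_{2}^{T,\Sigma}(r,q_0,\ldots,q_k)$, which by construction vanish unless $T-\tau \ge \tau_k \ge \cdots \ge \tau_1$ and all $q_1,\ldots,q_k$ are nonzero.

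The key step is the change of variables $s = T-\tau$ (so $s$ ranges over $[0,T]$) together with renaming the ordered times $\tau_1 \le \cdots \le \tau_k$. After this substitution, each term becomes
\[
 \int_0^T \int_0^{s} \int_0^{\sigma_k} \cdots \int_0^{\sigma_2} \norm{w_{q_1,\ldots,q_k,q_0,r}(s-\sigma_k,\ldots,\sigma_1)}_2^2 \, e^{\lambda s}\, d\sigma_1 \cdots d\sigma_k \, ds,
\]
which is exactly one summand of the expression for $\|\Sigma\|_{\lambda,H_2}^2$ in Lemma \ref{ass4:lemma} (with the roles of the output index $i_y$ played by $r$ and the "extra" index by $q_0$, and the input index $i_u$ matched appropriately). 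Since the integration domain $[0,T]$ is contained in $[0,\infty)$ and all integrands are nonnegative, each term is bounded by the corresponding term over $[0,\infty)$, so
\[
 \norm{\w_{2}^{T, \Sigma}}_{\mathcal{H}_2}^2 \le \|\Sigma\|_{\lambda,H_2}^2 = \sum_{i_u=0}^{n_p} \mathrm{trace}(B_{i_u}^T Q B_{i_u}) < +\infty,
\]
using Assumption \ref{ass4} and Lemma \ref{ass4:lemma}. This simultaneously establishes membership in $\mathcal{H}_2$ and the norm bound $\norm{\w^{T,\Sigma}}_{\mathcal{H}} \le \norm{\Sigma}_{\lambda,H_2}$ claimed in Lemma \ref{lemma:main}.

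The main obstacle I anticipate is purely notational: carefully matching the multiindex conventions of $\mathcal{H}_2$ (where the first index $r$ is distinguished and comes from the $B$-side, while $q_0$ is the output matrix index) with the ordering of indices $w_{i_1,\ldots,i_k,i_y,i_y}$ used in Lemma \ref{ass4:lemma}, and making sure the exponential weight $e^{\lambda(T-\tau)}$ inside $\w_2$ is squared correctly against the $e^{-\lambda(T-\tau)}$ weight carried by $\varphi_2$ so that the scalar product $\langle \w_2^{T,\Sigma}, \varphi_2^{T,\uu,\p}\rangle_{\mathcal{H}_2}$ reproduces $T_2$ without any leftover exponential factor. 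Once the index dictionary is fixed, the inequality is immediate from nonnegativity of the integrands and the convergence already proven in Lemma \ref{ass4:lemma}.
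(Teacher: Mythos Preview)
Your proposal is correct and follows essentially the same route as the paper: write out $\norm{\w_{2}^{T,\Sigma}}_{\mathcal{H}_2}^2$, substitute $s=T-\tau$ (the paper writes $\tau_{k+1}=T-\tau$), enlarge the outer integral from $[0,T]$ to $[0,\infty)$ by nonnegativity, and identify the result with the convergent series $\|\Sigma\|_{\lambda,H_2}^2$ from Lemma~\ref{ass4:lemma}. Your concern about the index dictionary and the squaring of the exponential weight is well placed, but it is bookkeeping only and does not alter the argument.
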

\begin{proof}
    Using the notation $\tau_{k+1} = T - \tau$
    and the definition of $\mathcal{H}_2$ and Lemma \ref{ass4:lemma}we have
    \begin{align*}
    \norm{\w_{2}^{T, \Sigma}}_{\mathcal{H}_2}^2 
    &= \sum\limits_{r,q = 0}^{n_p}
    \int_0^{T} \| w_{r,q}(T-\tau)\|_2^2 e^{\lambda (T-\tau) } d\tau 
    \\
    & + \sum\limits_{k = 1}^{\infty} \sum\limits_{ q_1 \dots,q_k = 1}^{n_p}
    \int_{0}^{T} \int_{0}^{\tau_{k+1}}\dots \int_{0}^{\tau_2}
    \| w_{q_1,\ldots,q_k}(\tau_{k+1},\tau_k,\ldots,\tau_1)e^{\lambda \tau_{k+1}}
    d\tau_{k+1}\cdots \tau_{1}
    \le \\
    & \int_0^{\infty} \| w_{r,q}(T-\tau)\|_2^2 e^{\lambda (T-\tau) } d\tau 
    \\
    & + \sum\limits_{k = 1}^{\infty} \sum\limits_{ q_1 \dots,q_k = 1}^{n_p}
    \int_{0}^{\infty} \int_{0}^{\tau_{k+1}}\dots \int_{0}^{\tau_2}
    \| w_{q_1,\ldots,q_k}(\tau_{k+1},\tau_k,\ldots,\tau_1)e^{\lambda \tau_{k+1}}
    d\tau_{k+1}\cdots \tau_{1}
    \le c_1 \\
    \end{align*}
\end{proof}

%

\section{Proof of Lemma \ref{lemma:sigma}}
\label{app_3}

Follows from the proof of Lemma \ref{lemma:w2}.

\section{Proof of Theorem \ref{theo:vc}}
\label{app_4}

 From \eqref{vc:eq1} it follows that it is enough to show that
 \begin{align*}
        \mathbb{P}_{S}\Bigg(\forall \Sigma \in \mathfrak{L}, \phi \in \Phi:
        \mathcal{L}(\Sigma,\phi) - \mathcal{L}_{N}(\Sigma,\phi)
         \leq \bar{R}(T, N, \delta)\Bigg) \geq 1 - \delta.
\end{align*}

To this end, notice that according to the classical argument (e.g. see the proof of Lemma 26.2 in \cite{shalev2014understanding})

   \begin{align*}
     G_{\mathfrak{L},\Phi}&:= E_{S} [ \sup_{\Sigma \in \mathfrak{L},\phi \in \Phi}
     \mathcal{L}(\Sigma,\phi) - \mathcal{L}_{N}(\Sigma,\phi)] \\
     &\leq \frac{1}{N} E_{S,S^{'}}[
     E_{\sigma_i \in \{+1,-1\}} [ 
     \sup_{\Sigma \in \mathfrak{L},\phi \in \Phi}
     \sum_{i=1}^{N} \sigma_i \left(\ell(\y_i,y_{\Sigma}(\uu_i,\phi(\uu_i),T)- \right. \left. \ell(\y_i^{'},y_{\Sigma}(\uu_i^{'},\phi(\uu_i^{'}),T) \right)]
   \end{align*}  

where $\sigma_{i}$, $i=1,\ldots,N$ are i.i.d uniformly distributed and taking values in $\{-1,1\}$, 
$S=\{\y_i,\uu(i)\}_{i=1}^{N}$
$S=\{\y^{'}_i,\uu^{'}(i)\}_{i=1}^{N}$ are independent
i.i.d. samples from $\mathcal{D}=(\mathcal{U},\mathcal{Y})$
Define
\begin{align*}
 &Z_{\phi}=\frac{1}{N} \sup_{\Sigma \in \mathfrak{L}} \sum_{i=1}^{N} \sigma_i  (\ell(y_i,y_{\Sigma}(\uu_i,\phi(\uu_i),T)-   \ell(y_i,y_{\Sigma}(\uu_i^{'},\phi(\uu_i^{'}),T) )  \\
&\mu_{\phi}= E_{\sigma_i \in \{+1,-1\}} [Z_{\phi}]
\end{align*} 

If $\Phi$ is a finite set, then

 \begin{align*}
    G_{\mathfrak{L},\Phi} 
      \le E_{S,S^{'}} E_{\sigma_i \in \{+1,-1\}} [\sup_{\phi \in \Phi} Z_{\phi}] \leq \mathrm{card}(\Phi) E_{S,S^{'}}[\sup_{\phi} \mu_{\phi}]
 \end{align*}

If $\Phi$ is not necessarily finite, then
    \begin{align*}
    G_{\mathfrak{L},\Phi} 
      \le E_{S,S^{'}} E_{\sigma_i \in \{+1,-1\}} [\sup_{\phi \in \Phi} Z_{\phi}] \leq E_{S,S^{'}} 
      E_{\sigma_i \in \{+1,-1\}} [\sup_{\phi \in \Phi} (Z_{\phi}-\mu_{\phi})] + E_{S,S^{'}} [\sup_{\phi \in \Phi} \mu_{\phi}]
   \end{align*}  
Let $H_{\phi}=Z_{\phi}-\mu_{\phi}$. It then follows
that $E_{\sigma_i \in \{+1,-1\}} [Z_{\phi}-\mu_{\phi})]$ and
$|H_{\phi}|=|Z_{\phi} -\mu_{\phi}| \le B$. Then by Hoeffding's lemma
$P(|H_{\phi}\| > \rho) < 2e^{-2 \frac{\rho^2}{4B^2}}$ and hence
by the union bound
$P(\sup_{\phi \in \Phi} |H_{\phi}\| > \rho) \le 2 \tau(2N) e^{-2\frac{\rho^2}{4B^2}}$, where 

\begin{align*}
\tau(2N)=\sup_{ \{\uu_i\}_{i=1}^{2N} \in   L_2([0,T],\mathbb{R}^{n_u})}  
| \{ (\uu_i,\phi(\uu_i)) \mid \phi \in \Phi\}\| \leq (2eN/d_T)^{d_T}
\end{align*}    

Then by Lemma A.4 in \cite{shalev2014understanding}

\begin{align*}
& E_{\sigma_i \in \{+1,-1\}} [\sup_{\phi \in \Phi} (Z_{\phi}-\mu_{\phi})] \leq \sqrt{2}B(T)(2+\sqrt{\log(2eN/d_T)d_T})
\end{align*}

As to $\sup_{\phi} \mu_{\phi}$, from standard theory of Rademacher
complexity and Lemma \ref{main} it follows that 
\[ \sup_{\phi} \mu_{\phi} \le     \frac{2K_{\ell} c_1(n_p+1)L_{\uu}}{\sqrt{N}} 
\]
That is, if $\Phi$ is finite, then
\[
G_{\mathfrak{L},\Phi} \le 2     \frac{\mathrm{card}(\Phi) 2K_{\ell} c_1(n_p+1)L_{\uu}}{\sqrt{N}} 
\]
and otherwise, 
\[
G_{\mathfrak{L},\Phi} \le \sqrt{2}B(2+\sqrt{\log(2eN/d_T)d_T}) +     \frac{2K_{\ell} c_1(n_p+1)L_{\uu}}{\sqrt{N}} 
\]
Applying McDiarmid inequality to
$\sup_{\phi \in \Phi} Z_{\phi}$
the statement of the theorem follows.

\end{document}